%% file: main.tex
\documentclass{article}

\usepackage[accepted]{icml2026}

\usepackage[utf8]{inputenc} 
\usepackage[T1]{fontenc}    
\usepackage{amsfonts}       
\usepackage{nicefrac}       
\usepackage{placeins}
\usepackage{authblk}
\usepackage{hyperref}
\usepackage{url}
\usepackage{microtype}
\usepackage{graphicx}
\usepackage{subfigure}
\usepackage{booktabs} 
\usepackage{enumitem}
\usepackage{caption}
\usepackage{multicol, multirow}
\usepackage[table]{xcolor}
\usepackage{makecell}
\usepackage{xspace}
\usepackage{fontawesome5}
\usepackage{pifont}
\usepackage{wrapfig}
\usepackage{stfloats}
\usepackage{listings}

\usepackage{amsmath}
\usepackage{amssymb}
\usepackage{mathtools}
\usepackage{amsthm}
\usepackage{tcolorbox}
\usepackage[capitalize,noabbrev]{cleveref}
\usepackage{algorithm}
\usepackage{algorithmic}
\usepackage{appendix}
\usepackage{titletoc}

\theoremstyle{plain}
\newtheorem{theorem}{Theorem}[section]
\newtheorem{proposition}[theorem]{Proposition}

\theoremstyle{definition}
\newtheorem{definition}[theorem]{Definition}

\theoremstyle{remark}

\definecolor{YaleBlue}{rgb}{0.059,0.302,0.573}
\definecolor{forestgreen}{rgb}{0.133,0.549,0.133}
\definecolor{lightblue}{rgb}{0.796, 0.894, 0.9808}
\definecolor{crimson}{rgb}{0.863,0.078,0.235}
\definecolor{natureblue}{RGB}{0,76,153}
\definecolor{naturepurple}{RGB}{115,65,130}
\definecolor{naturegray}{RGB}{248,248,248} 
\definecolor{code_old}{rgb}{0.25,0.5,0.5}
\definecolor{code_new}{rgb}{0.85, 0.18, 0.50}

\input{math_commands.tex}

\newcommand{\cmark}{\textcolor{forestgreen!80!black}{\ding{51}}}
\newcommand{\xmark}{\textcolor{crimson!80!black}{\ding{55}}}

\newcommand{\repeatword}[2]{%
  \begingroup
  \count0=0
  \loop\ifnum\count0<#2
    #1\,
    \advance\count0 by 1
  \repeat
  \endgroup
}

\renewcommand\Affilfont{\normalfont\small} 
\makeatletter
\renewcommand\AB@affilsepx{ \quad\protect\Affilfont\quad } 
\makeatother

\icmltitlerunning{Dispersion Loss Counteracts Embedding Condensation and Improves Generalization in Small Language Models}

\begin{document}

\twocolumn[
\icmltitle{Dispersion Loss Counteracts Embedding Condensation and\\Improves Generalization in Small Language Models}

\icmlsetsymbol{equal}{*}

\begin{icmlauthorlist}
\vskip -0.08in
\icmlauthor{Chen Liu}{equal,yale}
\icmlauthor{Xingzhi Sun}{equal,yale}
\icmlauthor{Xi Xiao}{equal,uab,ornl}
\icmlauthor{Alexandre Van Tassel}{equal,yale}
\icmlauthor{Ke Xu}{yale}
\icmlauthor{Kristof Reimann}{yale}
\icmlauthor{Danqi Liao}{yale}
\icmlauthor{Mark Gerstein}{yale}
\icmlauthor{Tianyang Wang}{uab}
\icmlauthor{Xiao Wang}{ornl}
\icmlauthor{Smita Krishnaswamy}{yale}
\end{icmlauthorlist}

\icmlaffiliation{yale}{Yale University}
\icmlaffiliation{uab}{University of Alabama at Birmingham}
\icmlaffiliation{ornl}{Oak Ridge National Laboratory}

\icmlcorrespondingauthor{Smita Krishnaswamy}{smita.krishnaswamy@yale.edu}

\vskip 4pt
\centering
\small{
\faGithub~~\href{https://github.com/KrishnaswamyLab/LM-Dispersion}{$\texttt{KrishnaswamyLab/LM-Dispersion}$}
} \hspace{8pt}
\faHome~~\href{https://chenliu-1996.github.io/projects/LM-Dispersion/}{Project Page}
\vskip 0.15in
]
\printAffiliationsAndNotice{\icmlEqualContribution}

\begin{abstract}
\vspace{-1.5pt}
Large language models (LLMs) achieve remarkable performance through ever-increasing parameter counts, but scaling incurs steep computational costs. To better understand LLM scaling, we study representational differences between LLMs and their smaller counterparts, with the goal of replicating the representational qualities of larger models in smaller models. We observe a geometric phenomenon which we term \textit{\textbf{embedding condensation}}, where token embeddings collapse into a narrow cone-like subspace in some language models. Through systematic analyses across multiple Transformer families, we show that small models such as \texttt{GPT2} and \texttt{Qwen3-0.6B} exhibit severe condensation, whereas larger models such as \texttt{GPT2-xl} and \texttt{Qwen3-32B} are more resistant to this phenomenon. Additional observations show that embedding condensation is not reliably mitigated by knowledge distillation from larger models. To fight against it, we formulate a dispersion loss that explicitly encourages embedding dispersion during training. Experiments demonstrate that it mitigates condensation, recovers dispersion patterns seen in larger models, and yields performance gains across 10 benchmarks. We believe this work offers a principled path toward improving smaller Transformers without additional parameters.
\end{abstract}

\vspace{-16pt}

\begin{figure*}[!tb]
\centering
\includegraphics[width=\linewidth]{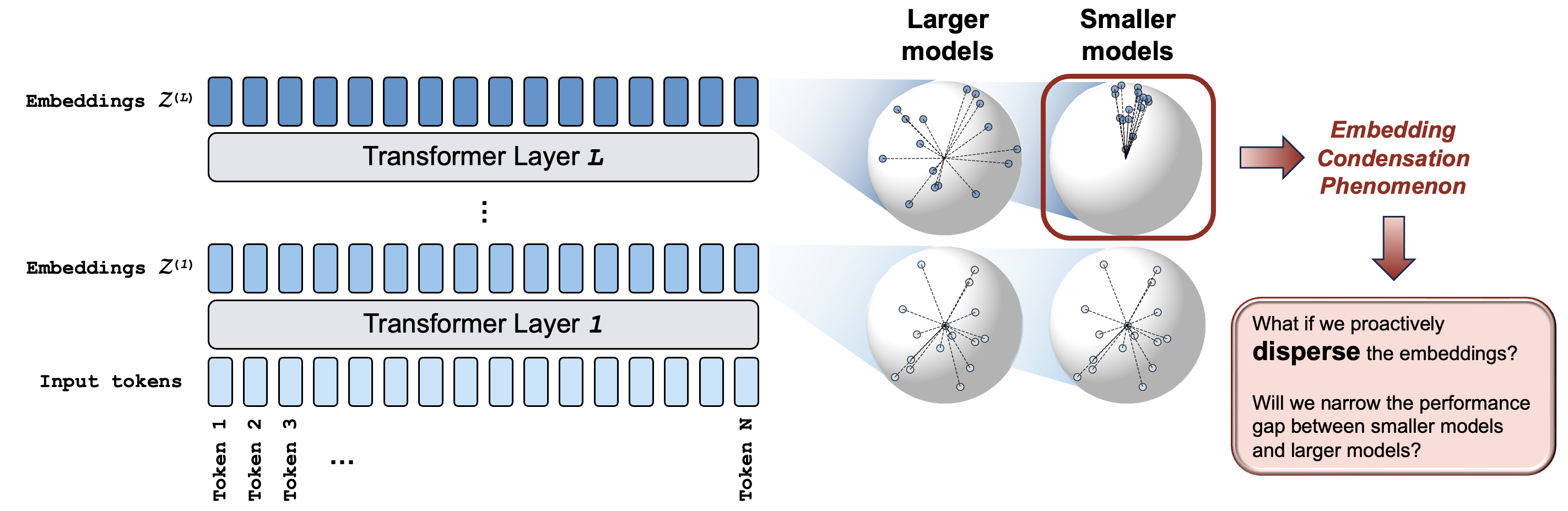}
\caption{Illustration of the embedding condensation phenomenon. In pre-trained language models, embeddings of all tokens from the same input sequence condense into a narrow cone after being processed by many Transformer layers. This phenomenon is substantially more pronounced in smaller models than in larger models within the same family, which motivates our hypothesis in Section~\ref{sec:hypothesis}.}
\label{fig:motivation}
\vspace{-8pt}
\end{figure*}

\section{Introduction}
\label{sec:intro}

The remarkable success of large language models has fundamentally transformed natural language processing, with performance consistently improving as parameter counts scale from millions to trillions~\cite{kaplan2020scaling, hoffmann2022training, minaee2024large}. However, this scaling presents significant practical challenges: larger models require substantial computational resources~\cite{zhang2022opt, llama3, openai2025gpt5}, making them inaccessible for many applications. This motivates a critical question: \textit{Can we identify and replicate the key properties that make large models effective, thus improving smaller models without simply adding more parameters?}

Recent theoretical work on idealized models has shown that Transformer embeddings tend to cluster toward a single point as depth approaches infinity~\cite{Math_Transformer}, but the empirical manifestation of this phenomenon and its relationship to model performance remain underexplored. In this work, we provide a comprehensive empirical analysis of what we term \textit{\textbf{embedding condensation}}.

\vspace{-4pt}
\begin{tcolorbox}[
    colback=natureblue!10,
    boxrule=1pt,
    arc=8pt,
    left=4pt,
    right=4pt,
    top=0pt,
    bottom=0pt
]
\begin{definition}
\label{def:embedding_condensation}
We define \textbf{embedding condensation} as the phenomenon that pairwise cosine similarities of token embeddings in Transformer models concentrate near $1$, implying that embedding vectors point towards nearly identical directions and therefore condense into a narrow cone in the representation space (Figure~\ref{fig:motivation}).
\end{definition}
\end{tcolorbox}
\vspace{-4pt}

Through systematic similarity-based measurements of embedding vector directions across multiple Transformer families, we demonstrate that smaller models (e.g., \texttt{GPT2}, \texttt{Qwen3-0.6B}) exhibit severe embedding condensation, with embedding vectors concentrated towards nearly the same direction and therefore undermining representational diversity. In contrast, larger models (e.g., \texttt{GPT2-xl}, \texttt{Qwen3-32B}) naturally maintain \textit{\textbf{embedding dispersion}}. 

\vspace{-4pt}
\begin{tcolorbox}[
    colback=natureblue!10,
    boxrule=1pt,
    arc=8pt,
    left=4pt,
    right=4pt,
    top=0pt,
    bottom=0pt
]
\begin{definition}
\label{def:embedding_dispersion}
We define \textbf{embedding dispersion} as the resistance to embedding condensation, with embedding vectors pointing towards diverse directions, showing better coverage of the representation space.
\end{definition}
\end{tcolorbox}
\vspace{-4pt}

This geometric perspective reveals a fundamental insight: \textit{condensation might be a key bottleneck limiting the expressivity of smaller Transformers}. Notably, we observe that condensation emerges very early in the training process and cannot be easily resolved by knowledge distillation from larger models, motivating the need for mechanisms that explicitly target embedding geometry.

We hypothesize that the embedding dispersion of larger models leads to their superior performance, suggesting that counteracting condensation could narrow the performance gap between smaller and larger models.

To test this hypothesis, we propose a \textbf{dispersion loss} that explicitly encourages embedding dispersion during training, serving as an auxiliary objective that promotes representational diversity. Our empirical evaluations show that dispersion loss counteracts embedding condensation in smaller models and leads to performance gains across 10 language understanding tasks when applied to models in the \texttt{GPT2} and \texttt{Qwen3} families during mid-training.

Crucially, when incorporated into full pre-training, the proposed dispersion loss also yields an $+1.17$ average improvement across tasks, which achieves a $3.3\%$ gain over the baseline trained with the default cross-entropy loss.

The key contributions of this work are listed below.
\begin{enumerate}[topsep=4pt, itemsep=6pt, parsep=0pt]
    \item We observe and define the \textit{embedding condensation} phenomenon, where cosine similarities between token embeddings concentrate towards $1$ after being processed by Transformer layers.
    
    \item We show that embedding condensation is more pronounced in smaller models than in larger models, emerges at initialization, and is not mitigated by standard knowledge distillation.

    \item We propose a dispersion loss and three alternative formulations that explicitly regulate embedding geometry during training, with stable implementations designed for practical and scalable optimization.

    \item We demonstrate that dispersion-aware training counteracts embedding condensation and improves model generalization, yielding consistent gains during mid-training and during full pre-training.
\end{enumerate}

\section{Methods}
\subsection{Preliminaries: a theoretical perspective on embedding condensation}
Consider a sequence of $N$ tokens and let $\mathcal{Z}^{(l)} = [z_1^{(l)}, z_2^{(l)}, \ldots, z_N^{(l)}]^\top \in \mathbb{R}^{N \times d}$ denote the token embeddings after layer $l$ in a Transformer. These token embeddings are the contextualized representations at each token position, in the form of $d$-dimensional vectors. $\mathcal{Z}^{(l)}$ can be interpreted as $N$ particles in a $d$-dimensional space, and Transformer layers are external impacts on the particle system. A theory paper~\cite{Math_Transformer} has mathematically proven that, under idealized settings, these embeddings tend to cluster into a single point as the number of layers approaches infinity, but limited empirical evidence has been provided.

\begin{figure*}[!th]
\centering
\includegraphics[width=\linewidth]{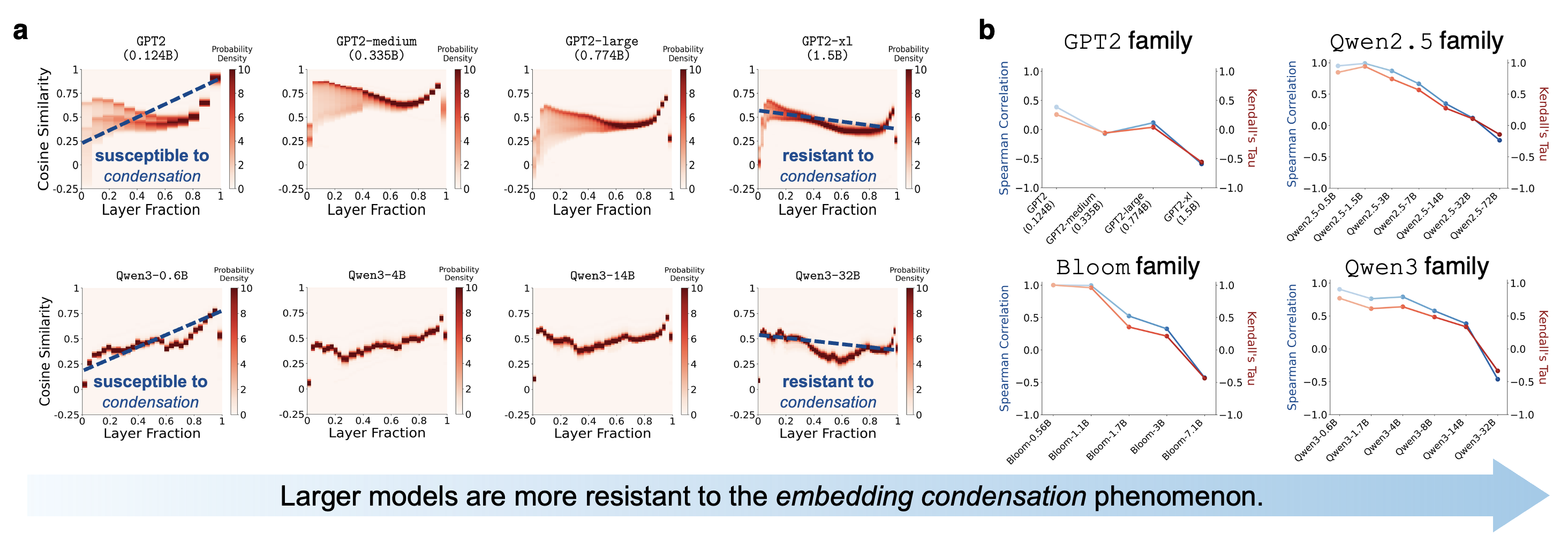}
\caption{Qualitative and quantitative observations of the embedding condensation phenomenon. \textbf{a.} The cosine similarity heatmaps demonstrate that smaller models (e.g., \texttt{GPT2}, \texttt{Qwen3-0.6B}) are susceptible to condensation, since token cosine similarities become increasingly positive as the embeddings proceed to deeper layers. In contrast, larger models (e.g., \texttt{GPT2-xl}, \texttt{Qwen3-32B}) are more resistant to embedding condensation. \textbf{b.} Quantifications using Spearman correlation and Kendall's Tau demonstrate a consistent trend of ``larger model, less condensation'' across multiple families of language models. Additional results can be found in Figure~\ref{fig:supp_observation}.}
\label{fig:observation}
\end{figure*}

\subsection{Quantifying the layer-by-layer evolution of embedding vector alignment in Transformers}

Let $z_i^{(l)} \in \mathbb{R}^d$ denote the embedding of token $i$ after layer $l$. Pairwise cosine similarity quantifies the angle between two embedding vectors, as defined as $\texttt{cossim} \left(z_i^{(l)}, z_j^{(l)} \right) = \frac{z_i^{(l)\top} \cdot z_j^{(l)} }{\lVert z_i^{(l)} \rVert \cdot \lVert z_j^{(l)} \rVert}$. Cosine similarities lie in $[-1, 1]$, with a value of $1$ indicating complete directional alignment, $-1$ indicating opposite directions, and $0$ indicating orthogonality.

To obtain the embeddings at each layer $l$, we feed the input sequence of length $N$ tokens to the Transformer, and collect the token embeddings $[z_1^{(l)}, z_2^{(l)}, \ldots, z_N^{(l)}]^\top$ after the sequence has been processed by the model layers $1,2,\ldots,l$. We then compute cosine similarities $\{\texttt{cossim}(z_i^{(l)}, z_j^{(l)})\}$ for all $N^2$ pairs. The resulting values form a distribution that we visualize as a histogram for each layer. By stacking these histograms across depth, we create a heatmap that highlights the evolution of embedding vector alignment layer by layer. 

In this work, every heatmap is created using a population average over $n=100$ randomly selected input sequences from \texttt{wikitext-103}~\cite{wikitext}. We have experimented with different types of input text corpora, including \texttt{pubmed\_qa}~\cite{pubmed_qa}, \texttt{imdb}~\cite{imdb}, and \texttt{squad}~\cite{squad}, and the same trends persist regardless of the dataset (see Appendix~\ref{sec:condensation_other_datasets}).

\subsection{Comparing the layer-by-layer evolution of embedding vector alignment across models}

To provide quantitative comparisons among models, we compute two metrics to summarize the overall trend of the embedding condensation phenomenon.

For each Transformer layer $l$, we summarize the pairwise cosine similarity distribution by its mean value
$\mu^{(l)} = \frac{1}{N^2} \sum_{i=1}^{N} \sum_{j=1}^{N} \texttt{cossim}\!\left(z_i^{(l)}, z_j^{(l)}\right)$, and then quantify the monotonic relationship between layer depth and embedding similarity by computing rank-based correlation between $\{\mu^{(l)}\}_{l=1}^{L}$ and the layer index sequence $\{l\}_{l=1}^{L}$. Specifically, we report both the Spearman rank correlation coefficient $\rho$~\cite{Spearman_corr} and Kendall's Tau $\tau$~\cite{Kendall_tau}.

These statistics measure the extent to which embedding similarity changes monotonically with depth, independent of absolute scale or nonlinear distortions. Compared to simpler measures such as average cosine similarity in the last few layers, these rank-based correlation metrics show clearer trends~(see Appendix~\ref{sec:condensation_average_cossim}). Large positive/negative values indicate a monotonic increase/decrease in the directional alignment of the embedding vectors, while values near $0$ indicate no systematic trend.

\vspace{-4pt}
\section{Key Observations}
\subsection{Observation of embedding condensation}
\label{sec:observation}

\subsubsection{Analyses on a variety of architectures}

Applying the above analyses to multiple Transformer families reveals a clear trend that depend on model sizes. As shown in Figure~\ref{fig:observation}, smaller models such as \texttt{GPT2} and \texttt{Qwen3-0.6B} exhibit a \textbf{sharp upward drift} of cosine similarity distributions with depth. The embeddings become increasingly aligned, and in \texttt{GPT2} the distribution collapses almost entirely near $1$, indicating a near-perfect directional alignment. \texttt{Qwen3-0.6B} shows the same tendency, though its collapse remains less extreme. We refer to this degeneracy as embedding condensation (Definition~\ref{def:embedding_condensation}).

In contrast, larger models such as \texttt{GPT2-xl} and \texttt{Qwen3-32B} either maintain relatively moderate cosine similarities across layers or exhibit a gradual decrease following an initial increase, suggesting a stronger resistance to embedding condensation. We refer to this behavior as embedding dispersion (Definition~\ref{def:embedding_dispersion}).

For completeness, we also report a simpler metric, namely the average $\texttt{cossim}$ over the last few layers. It shows the same overall trend, although the effect is less pronounced (see Appendix~\ref{sec:condensation_average_cossim}). We therefore use Spearman's $\rho$ and Kendall's $\tau$ for all subsequent analyses.
\vspace{-8pt}
\begin{tcolorbox}[
    colback=naturegray,
    colframe=natureblue!15,
    title=\textsf{Takeaway 1: Larger model, less condensation.},
    coltitle=black,
    boxrule=1pt,
    arc=8pt,
    left=4pt,
    right=4pt,
    top=0pt,
    bottom=0pt
]
\begin{description}[leftmargin=2em, labelwidth=1.4em, itemsep=0pt, parsep=0pt, topsep=0pt]
  \item[\textbf{Q:}] What phenomenon do we observe?
  \item[\textcolor{natureblue}{\textbf{A:}}] \textcolor{natureblue}{Within the same model family, smaller models exhibit severe embedding condensation, with token embeddings collapsing toward near-parallel directions, while larger models resist this collapse.}
\end{description}
\end{tcolorbox}
\vspace{-4pt}

\begin{figure*}[!th]
\centering
\includegraphics[width=\linewidth]{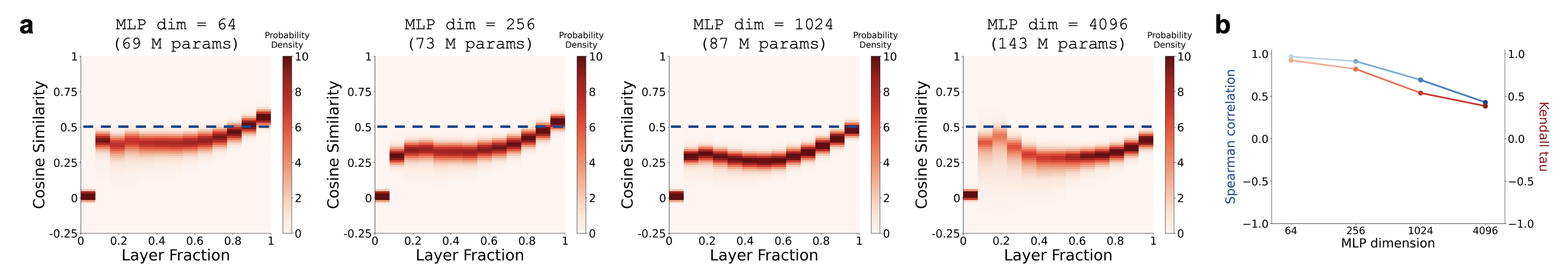}
\caption{In a highly controlled experiment, we reproduced the observation of ``larger model, less condensation''. We pre-trained four \texttt{GPT2}-like models of varying sizes that differ only in MLP dimension, while keeping all other factors fixed, including the number of layers, embedding dimension, dataset, and training configuration. The resulting models exhibit consistent trends in embedding condensation, shown qualitatively (panel \textbf{a}) and quantitatively (panel \textbf{b}). Horizontal dashed lines are added to panel \textbf{a} for easier visual comparison.}
\label{fig:results_controlled_experiment}
\vspace{-8pt}
\end{figure*}

\subsubsection{Confounder-controlled experiment}
\label{sec:controlled_experiment}

To further isolate the effect of model size from other confounding factors, we conduct a controlled experiment in which we pre-train four \texttt{GPT2}-like models for 1B training tokens, varying only the MLP dimension while keeping all other components fixed, including the number of layers, embedding dimension, dataset, and training settings. As shown in Figure~\ref{fig:results_controlled_experiment}, we observe the same trend: larger models consistently exhibit less embedding condensation. This controlled study provides stronger evidence that the relationship between model size and embedding condensation reflects a genuine empirical phenomenon rather than an artifact of correlated confounding factors.

\subsection{Further investigations on embedding condensation}
\label{sec:further_investigations}

We perform additional analyses to better understand when embedding condensation arises and whether it can be alleviated by common training strategies.

\subsubsection{Condensation emerges at initialization and is counteracted during training}
\label{sec:emergence}

We first track the evolution of embedding condensation throughout the pre-training process using checkpoints of \texttt{Olmo-3-1025-7B} spanning initialization, intermediate stages of pre-training, and the final base model (Figure~\ref{fig:observation_training}).

Embedding condensation is the most pronounced at initialization, with both correlation measures taking strongly positive values. This empirical observation is consistent with theoretical results~\cite{Math_Transformer}, which show that condensation arises in Transformers with random $(Q, K, V)$ matrices. As training progresses, these correlations decrease and eventually become negative, indicating that pre-training dynamics counteract, rather than induce, the initial tendency of embedding condensation.

\vspace{-8pt}
\begin{tcolorbox}[
    colback=naturegray,
    colframe=natureblue!15,
    title=\textsf{Takeaway 2: Condensation occurs early on.},
    coltitle=black,
    boxrule=1pt,
    arc=8pt,
    left=4pt,
    right=4pt,
    top=0pt,
    bottom=0pt
]
\begin{description}[leftmargin=2em, labelwidth=1.4em, itemsep=0pt, parsep=0pt, topsep=0pt]
  \item[\textbf{Q:}] At which training stage does the embedding condensation phenomenon occur?
  \item[\textcolor{natureblue}{\textbf{A:}}] \textcolor{natureblue}{It emerges at model initialization and is gradually mitigated, not exacerbated, by pre-training.}
\end{description}
\end{tcolorbox}
\vspace{-4pt}

\begin{figure}[!th]
\centering
\includegraphics[width=0.9\linewidth]{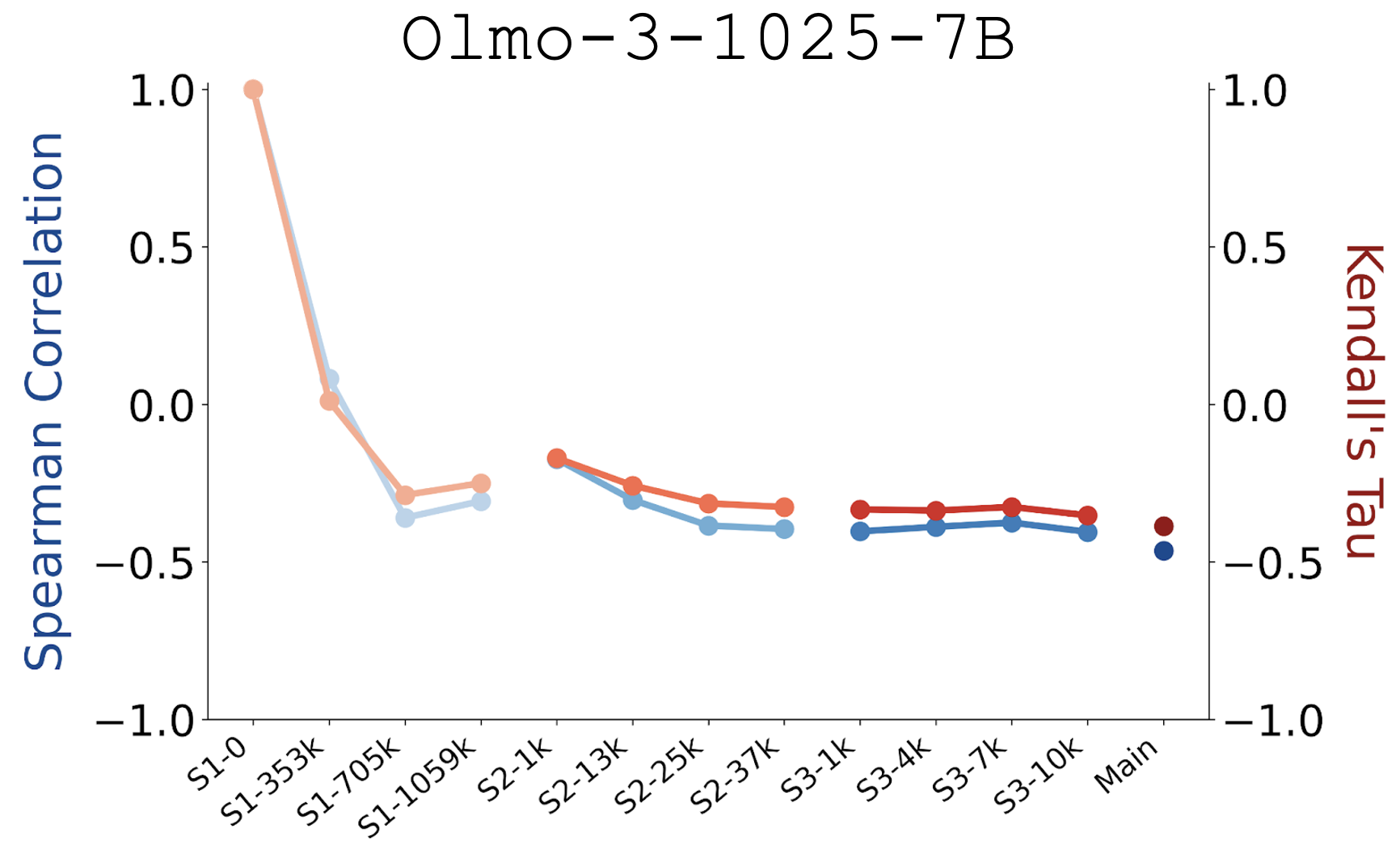}
\caption{Embedding condensation is observed immediately after model initialization. We analyze checkpoints of \texttt{Olmo-3-1025-7B} spanning initialization, intermediate pre-training stages, and the final base model. Each checkpoint is annotated by its training stage and the number of training tokens.}
\label{fig:observation_training}
\vspace{-10pt}
\end{figure}

\begin{figure*}[!th]
\centering
\includegraphics[width=\linewidth]{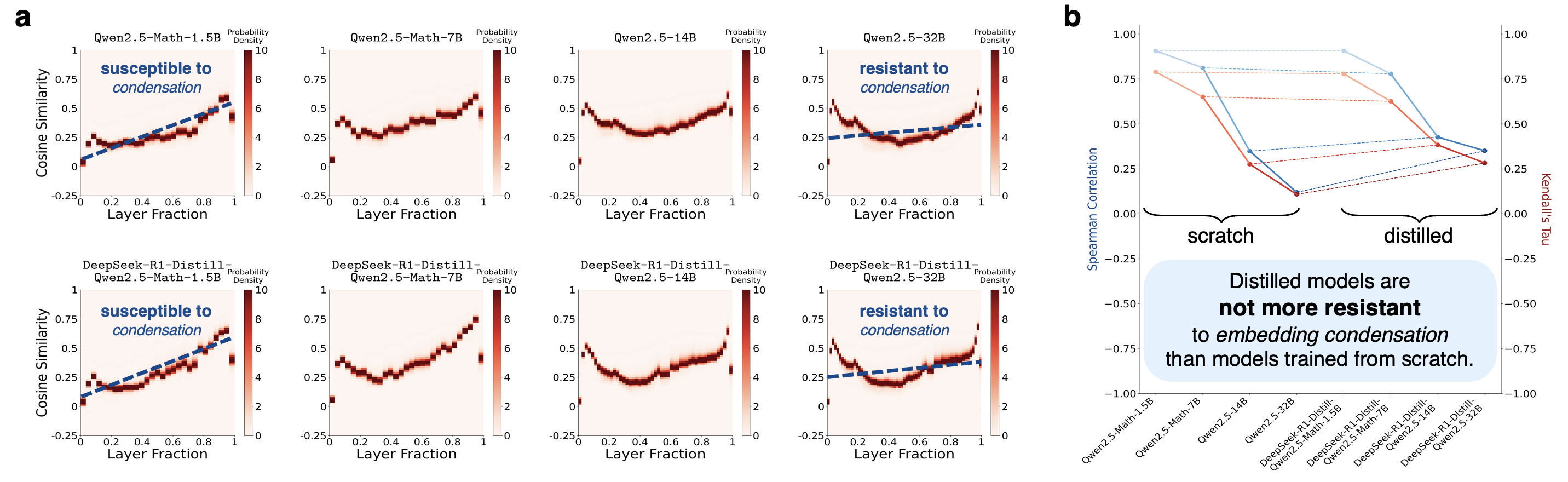}
\vspace{-8pt}
\caption{Knowledge distillation is not a remedy to embedding condensation, shown qualitatively (panel \textbf{a}) and quantitatively (panel \textbf{b}).}
\label{fig:observation_distillation}
\vspace{-4pt}
\end{figure*}

\subsubsection{Knowledge distillation does not inherently mitigate condensation}
\label{sec:knowledge_distillation}

Next, we examine whether knowledge distillation can transfer the favorable embedding geometry of larger models to smaller ones and thus provide a simple remedy to embedding condensation. Using the \texttt{Qwen2.5} family, we compare distilled models with their counterparts trained from scratch across a range of model sizes (Figure~\ref{fig:observation_distillation}).

Distilled models exhibit embedding condensation trends that closely mirror those of non-distilled models, both qualitatively (Figure~\ref{fig:observation_distillation}\textbf{a}) and quantitatively (Figure~\ref{fig:observation_distillation}\textbf{b}). In particular, distillation neither consistently alleviates condensation in smaller models nor amplifies the dispersion behavior characteristic of larger models, indicating that resistance to condensation is \textbf{not automatically inherited} from a larger teacher through distillation (in this case, the teacher model is \texttt{DeepSeek-R1} with 671B parameters). These results motivate the need for explicit mechanisms that directly target embedding geometry during training.

This behavior is expected given the form of the knowledge distillation objective. In modern LLM distillation, the student is trained to match the next-token distribution of the teacher through a logit-level distillation loss. As described in DeepSeek-R1~\cite{DeepSeek}, the distillation loss~\cite{KD_loss_DeepSeek} between the teacher logits $\ell_T^{(i)}$ and the student logits $\ell_S^{(i)}$ at token $i$ is given by \eqref{eqn:kd_loss}.
\begin{align}
\begin{split}
\label{eqn:kd_loss}
\mathcal{L}_{\mathrm{KD}} (\ell_T^{(i)}, \ell_S^{(i)}) 
&=
-\tau^2 \sum_{a=1}^{V}
\sigma_a \left(\frac{\ell_T^{(i)}}{\tau}\right)
\log \sigma_a \left(\frac{\ell_S^{(i)}}{\tau}\right)\\
\sigma_a(\ell)
&=
\frac{\exp(\ell_a)}{\sum_{b=1}^{V} \exp(\ell_b)},
\quad a = 1,\ldots,V.
\end{split}
\end{align}
The student is trained using a weighted combination of this term and the standard next-token prediction loss.

By construction, knowledge distillation primarily constrains the student at the level of output distributions. It does not explicitly regulate intermediate token embeddings, their pairwise relationships, or the layer-wise gradients that shape the representation geometry. Consequently, while knowledge distillation can effectively transfer predictive behavior, it does not inherently control the internal representational dynamics responsible for embedding condensation, explaining why resistance to condensation is not automatically inherited from a larger teacher model.

\vspace{-4pt}
\begin{tcolorbox}[
    colback=naturegray,
    colframe=natureblue!15,
    title=\textsf{Takeaway 3: Distillation is not a solution.},
    coltitle=black,
    boxrule=1pt,
    arc=8pt,
    left=4pt,
    right=4pt,
    top=0pt,
    bottom=0pt
]
\begin{description}[leftmargin=2em, labelwidth=1.4em, itemsep=0pt, parsep=0pt, topsep=0pt]
  \item[\textbf{Q:}] Larger models are resistant to condensation. Can we distill from them to obtain this resistance?
  \item[\textcolor{natureblue}{\textbf{A:}}] \textcolor{natureblue}{No. Knowledge distillation does not transfer the desired resistance to embedding condensation.}
\end{description}
\end{tcolorbox}
\vspace{-4pt}

\begin{figure*}[!th]
\centering
\includegraphics[width=\linewidth]{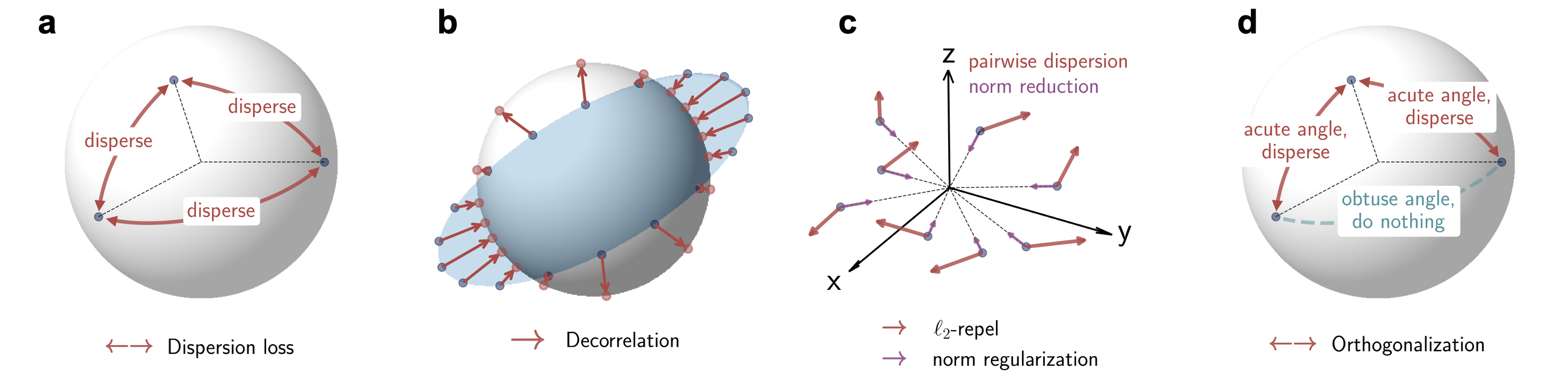}
\caption{Illustration of how dispersion loss and its alternative formulations promote embedding dispersion. \textbf{a.} Dispersion loss enforces uniform angular dispersion by spreading out all pairs along the unit hypersphere. \textbf{b.} Decorrelation loss encourages different feature dimensions to remain uncorrelated. \textbf{c.} $\ell_2$-repel loss increases pairwise Euclidean distance, while the norm regularization prevents unbounded expansion. \textbf{d.} Orthogonalization loss spreads out vectors forming acute angles while leaving obtuse ones unchanged.}
\label{fig:loss_illustration}
\end{figure*}

\begin{table*}[!th]
\caption{Our dispersion loss and its alternative formulations. Main implementation differences from \cite{diffuse_and_disperse} are highlighted in \textcolor{code_old}{teal} and \textcolor{code_new}{magenta}. Including or excluding diagonal terms yields identical gradients and is therefore cosmetic. For dispersion loss and $\ell_2$-repel, we adopt the \texttt{log-sum-exp} trick for numerical stability, which differs from $\log(\mathrm{mean}(\exp(\cdot)))$ only by an additive constant. For $\ell_2$-repel, we include a norm regularization term to prevent unbounded expansion of embeddings. For Orthogonalization, the distance margin is fixed to $\tfrac{1}{2}$ since we use angular distance, where $\tfrac{1}{2}$ corresponds to orthogonality and thus serves as the ideal margin.}
\vspace{-4pt}
\centering
\resizebox{\textwidth}{!}{%
    \begin{tabular}{lcccc}
    \toprule
    & \multicolumn{2}{c}{For generative modeling in} & \multicolumn{2}{c}{\textbf{For improving generalization of}} \\
    & \multicolumn{2}{c}{diffusion-based models~\cite{diffuse_and_disperse}} & \multicolumn{2}{c}{\textbf{Transformer-based language models (Ours)}} \\
    \cmidrule(lr){2-3} \cmidrule(lr){4-5}
    & formulation & term definition & formulation & term definition \\
    \midrule
    \textbf{Dispersion loss}
    & $\log \mathbb{E}_{i,j} [\exp (-D (z_i, z_j)/\tau)]$
    & \textcolor{code_old}{$D(z_i, z_j) = -\texttt{cossim}(z_i, z_j)$}
    & $\log \sum_{i,j}^{i \neq j} [\exp (-D (z_i, z_j)/\tau)]$ 
    & \textcolor{code_new}{$D(z_i, z_j) = \frac{\arccos \left( \texttt{cossim}(z_i, z_j) \right)}{\pi}$} \\
    \midrule
    {\footnotesize \textit{Alternative formulations}} \\
    Decorrelation
    & $\sum_{m,n} \text{Cov}_{mn}^2$
    & $\text{Cov}^2 = \frac{\mathcal{Z}_c^\top \mathcal{Z}_c}{d-1}$, $\mathcal{Z}_c = \frac{\mathcal{Z} - \mu_d(\mathcal{Z})}{\sigma_d(\mathcal{Z})}$
    & $\sum_{m, n}^{m \neq n} \text{Cov}_{mn}^2$ 
    & $\text{Cov}^2 = \frac{\mathcal{Z}_c^\top \mathcal{Z}_c}{d-1}$, $\mathcal{Z}_c = \frac{\mathcal{Z} - \mu_d(\mathcal{Z})}{\sigma_d(\mathcal{Z})}$ \\
    ${\ell_2}$-repel
    & $\log \mathbb{E}_{i,j} [\exp (-D (z_i, z_j)/\tau)]$
    & $D(z_i, z_j) = \lVert \mathcal{Z}_{i,:} - \mathcal{Z}_{:,j} \rVert_2^2 $
    & $\log \sum_{i,j}^{i \neq j} [\exp (-D (z_i, z_j)/\tau)] \textcolor{code_new}{+ \lambda_\text{norm} \lVert\mathcal{Z}\rVert_2^2}$
    & $D(z_i, z_j) = \lVert \mathcal{Z}_{i,:} - \mathcal{Z}_{:,j} \rVert_2^2 $\\
    Orthogonalization
    & $\mathbb{E}_{i,j} [\texttt{max}(0, \textcolor{code_old}{\epsilon} - D(z_i, z_j))^2]$
    & \textcolor{code_old}{$D(z_i, z_j) = -\texttt{cossim}(z_i, z_j)$}
    & $\mathbb{E}_{i, j}^{i \neq j} [\texttt{max}(0, \textcolor{code_new}{\frac{1}{2}} - D(z_i, z_j))^2]$ 
    & \textcolor{code_new}{$D(z_i, z_j) = \frac{\arccos \left( \texttt{cossim}(z_i, z_j) \right)}{\pi}$} \\
    \bottomrule
    \end{tabular}
}
\label{tab:loss_formulation}
\end{table*}

\subsection{Our Hypothesis}
\label{sec:hypothesis}

The observations above highlight an important implication: condensation reduces the diversity of directions in which tokens can be represented, effectively narrowing the model's expressive capacity. More importantly, we found that it cannot be easily remedied by distillation from a large model.

These observations motivate the following hypothesis.

\vspace{-4pt}
\begin{tcolorbox}[
    colback=naturegray,
    colframe=naturepurple!15,
    title=\textsf{Hypothesis: Larger models are better in language tasks because they counteract condensation.},
    coltitle=black,
    boxrule=1pt,
    arc=8pt,
    left=4pt,
    right=4pt,
    top=0pt,
    bottom=0pt
]
Embedding condensation reduces the expressivity of Transformers by collapsing token embedding vectors into narrow cones, under-utilizing the representation space. We hypothesize that by \textcolor{naturepurple}{\textbf{dispersing embeddings during training}}, smaller models can achieve representational qualities more similar to larger models, thus narrowing the performance gap without increasing the number of parameters.
\end{tcolorbox}

\subsection{Our Remedy: Dispersion Loss}
\label{sec:dispersion_loss}
Our hypothesis motivates the design of auxiliary objectives that explicitly promote embedding dispersion during training. For this purpose, we propose to augment the training loss with a dispersion loss as a regularizer. The dispersion loss is given by \eqref{eqn:dispersion_loss} and the full training objective is given by \eqref{eqn:main_loss}. The pseudocode is provided in Appendix~\ref{sec:pseudo_code}. Here, $\mathcal{L}_\text{train}$ denotes the standard training loss, which defaults to the cross-entropy loss for next-token prediction in most language models.
\vspace{-4pt}
\begin{align}
\label{eqn:dispersion_loss}
\mathcal{L}_\text{disp} &= \log \sum\nolimits_{i,j}^{i \neq j} e^{-\frac{\arccos \left( \texttt{cossim}(z_i, z_j) \right)}{\pi \tau}}\\
\label{eqn:main_loss}
\mathcal{L} &= \mathcal{L}_{\text{train}} + \lambda_\text{disp} \cdot \mathcal{L}_{\text{disp}}
\end{align}
\paragraph{Dispersion loss}
\vspace{-8pt}
The dispersion loss is a straightforward objective that directly counteracts the condensation of cosine similarities by spreading out all embeddings on the unit hypersphere (Table~\ref{tab:loss_formulation} row 1 and Figure~\ref{fig:loss_illustration}a). In practice, we use the inverse cosine to map cosine similarity to an angular distance for numerical stability. During training, the loss is computed over the embedding vectors of each input sequence and aggregated across all layers. The resulting implementation has a time complexity of $\mathcal{O}(N^2 F)$ per batch, where $N$ is the sequence length and $F$ is the feature dimension. In practice, we can subsample a small number of tokens over the sequence dimension, and the additional computational cost becomes manageable.

In addition to the canonical dispersion loss, we implemented three alternative formulations of the dispersion loss, and evaluated them in our main experiments.

\paragraph{Decorrelation}
The decorrelation formulation minimizes off-diagonal entries of the covariance matrix of embeddings~(Table~\ref{tab:loss_formulation} row 2 and Figure~\ref{fig:loss_illustration}b). By construction, this loss reduces the correlations between feature dimensions, which indirectly promotes more diverse embedding vector directions in the representation space.

\paragraph{$\ell_2$-repel}
The $\ell_2$-repel formulation directly pushes pairs of embedding vectors apart in the Euclidean space. However, minimizing this objective can be achieved trivially by increasing the embedding norms, as larger magnitudes inflate pairwise distances. To prevent this degeneracy, we include an explicit norm regularization term that constrains unbounded growth (Table~\ref{tab:loss_formulation} row~3 and Figure~\ref{fig:loss_illustration}c).

\paragraph{Orthogonalization}
The orthogonalization formulation is similar to the canonical dispersion loss, except that the dispersion vanishes when two vectors are orthogonal to each other~(Table~\ref{tab:loss_formulation} row 4 and Figure~\ref{fig:loss_illustration}d). The distance margin $\epsilon$ is naturally set to $\frac{1}{2}$, which corresponds to orthogonality under the angular distance.

\subsubsection{Potential effectiveness on larger models}

Having introduced dispersion loss as an explicit mechanism for regulating embedding geometry, we revisit the role of model size in embedding condensation. As shown in Appendix~\ref{sec:dimension}, increased embedding dimensionality provides a geometric expectation in which randomly oriented vectors show reduced condensation, but this expectation does not guarantee that trained representations fully utilize the available space. As a result, the resistance to embedding condensation empirically observed in larger models may reflect an increased representational capacity rather than an explicit dispersion mechanism. This observation raises the possibility that \textit{our dispersion loss could benefit not only small models but also large models}, which we leave for future investigations.

\vspace{-4pt}
\section{Empirical Results}
We evaluate our proposed dispersion loss under two training regimes. First, we conduct mid-training experiments~\cite{Octothinker}, in which we continue training pre-trained \texttt{GPT2} and \texttt{Qwen3} models for an additional 200~M tokens on the \texttt{wikitext-103} dataset~\cite{wikitext}. We repeat the experiments under three random seeds, and report the mean and standard deviation for each metric. This setting provides a computationally efficient proof of concept, feasible on a single NVIDIA A100 GPU, enabling controlled ablations and systematic hyperparameter studies.

We then perform full pre-training from scratch to examine the effects of incorporating dispersion loss on the formation of representational geometry. \texttt{Qwen3} models are trained on the \texttt{allenai/c4} dataset~\cite{AI2_C4} for 156~B tokens using 640 GPUs.

Experimental details, including training protocols and hyperparameters, are provided in Appendix~\ref{sec:setting_and_hyperparams}. The evaluation benchmarks are described in Appendix~\ref{sec:benchmark_description}.

\begin{figure*}[!th]
    \centering
    \includegraphics[width=\linewidth]{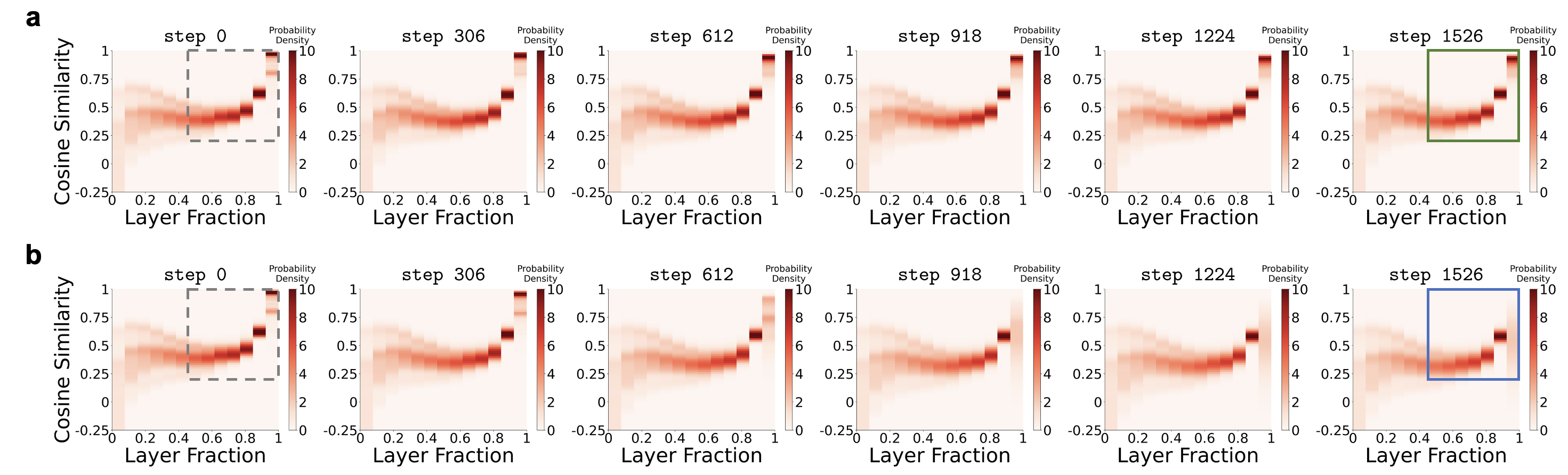}
    \caption{Dispersion loss counteracts the embedding condensation phenomenon. \textbf{a.} Starting from condensed embeddings (gray dashed box), mid-training with the default loss has a limited impact (green box). \textbf{b.} In contrast, mid-training with our dispersion loss as a regularizer substantially mitigates embedding condensation (blue box). }
    \label{fig:results_condensation_counteract}
\vspace{-8pt}
\end{figure*}

\vspace{-4pt}
\subsection{Dispersion loss counteracts the embedding condensation phenomenon}
\label{sec:counteract}

First, we examine whether dispersion loss can directly counteract the embedding condensation phenomenon. 

Using the same heatmap visualization, we observe that pre-trained \texttt{GPT2} exhibits severe embedding condensation, with pairwise cosine similarities rapidly collapsing toward $1$ in deeper layers~(Figure~\ref{fig:results_condensation_counteract} column 1). Continuing training with the standard cross-entropy objective provides minimal relief, leaving the overall condensation pattern largely intact~(Figure~\ref{fig:results_condensation_counteract}a). In contrast, incorporating dispersion loss substantially alters the geometry of token representations. As training progresses, the cosine similarity distributions become more spread out~(Figure~\ref{fig:results_condensation_counteract}b). These results indicate that dispersion-aware training can restore representational diversity even when applied during mid-training.

\begin{table*}[!th]
\caption{Using dispersion loss during mid-training improves performance on language tasks. For each base model, the best single-benchmark metrics are displayed in bold, whereas the best average ranks and best average performances are boxed and in bold. We also perform the Student's $t$-tests on the average performances and report the significance level with respect to ``$\mathcal{L}_\text{train}$ + Dispersion loss''.}
\vspace{-4pt}
\centering
\setlength{\tabcolsep}{2pt}
\resizebox{\textwidth}{!}{%
    \begin{tabular}{lclc ccccccccccccc}
    \toprule
    \multirow{2}{*}{Model} & \multicolumn{2}{c}{Mid-training} & Training time & \multicolumn{6}{c}{Zero-shot} & \multicolumn{4}{c}{Few-shot} & \multirow{2}{*}{\textbf{Rank}$\downarrow$} & \multirow{2}{*}{\textbf{Average}$\uparrow$} & \multirow{2}{*}{$t$-test} \\
    
    \cmidrule(r){2-3} \cmidrule(lr){4-4} \cmidrule(lr){5-10} \cmidrule(lr){11-14}
    & Train & \multicolumn{1}{c}{Loss} & A100 hours
    & \texttt{ANLI$_\text{R2}$}$\uparrow$
    & \texttt{LAMBADA$_\text{openai}$}$\uparrow$
    & \texttt{OpenbookQA}$\uparrow$
    & \texttt{PIQA}$\uparrow$
    & \texttt{TruthfulQA}$\uparrow$
    & \texttt{WinoGrande}$\uparrow$
    & \texttt{ARC$_\text{easy}$}$\uparrow$
    & \texttt{ARC$_\text{challenge}$}$\uparrow$
    & \texttt{MedMCQA}$\uparrow$
    & \texttt{MMLU}$\uparrow$ \\

    \midrule
    \texttt{GPT2} & \xmark & \multicolumn{1}{c}{---} & \multicolumn{1}{c}{---} & 
    34.4 & 30.8 & 15.6 & 61.6 & 40.3 & 52.4 & 42.0 & 16.2 & 25.4 & 24.8 & 6.1 & 34.35 & $p$ < 0.0001 \\

    \cmidrule(l){2-16}
    & \cmark & $\mathcal{L}_\text{train}$ & 1.122 (1.00$\times$) &
    34.0 $\pm$ 0.3 & 32.4 $\pm$ 0.6 & 16.4 $\pm$ 0.1 & 62.2 $\pm$ 0.7 & 43.6 $\pm$ 0.2 & 52.8 $\pm$ 3.1 & 41.6 $\pm$ 0.0 & 17.4 $\pm$ 0.6 & 24.2 $\pm$ 3.8 & 24.8 $\pm$ 0.4 & 6.2 & 34.95 $\pm$ 0.11 & $p$ < 0.01 \\

    & \cmark & $\mathcal{L}_\text{train}$ + noisy embedding & 1.122 (1.00$\times$) &
    34.3 $\pm$ 0.1 & 33.0 $\pm$ 0.3 & 16.9 $\pm$ 0.4 & 61.1 $\pm$ 0.7 & 43.9 $\pm$ 0.2 & 53.6 $\pm$ 2.0 & 43.7 $\pm$ 0.4 & 18.0 $\pm$ 0.6 & 21.6 $\pm$ 2.5 & 25.4 $\pm$ 0.4 & 4.3 & 35.15 $\pm$ 0.06 & $p$ < 0.01 \\

    & \cmark & $\mathcal{L}_\text{train}$ + active forgetting & 1.127 (1.00$\times$) &
    \textbf{35.0} $\pm$ 0.3 & 33.8 $\pm$ 1.1 & 16.6 $\pm$ 0.3 & 61.0 $\pm$ 0.0 & 44.0 $\pm$ 0.1 & 52.2 $\pm$ 0.8 & \textbf{44.3} $\pm$ 0.4 & \textbf{18.6} $\pm$ 1.1 & 22.4 $\pm$ 1.7 & \textbf{25.7} $\pm$ 0.2 & \textbf{\fbox{3.2}} & 35.36 $\pm$ 0.15 & \textit{n.s.} \\

    \cmidrule(l){2-16}
    & \cmark & $\mathcal{L}_\text{train}$ + {\small Decorrelation} & 1.221 (1.09$\times$)
    & \textbf{35.0} $\pm$ 0.0 & 33.6 $\pm$ 0.5 & \textbf{17.0} $\pm$ 0.6 & 60.8 $\pm$ 0.6 & 43.9 $\pm$ 0.3 & 52.0 $\pm$ 1.7 & 43.8 $\pm$ 0.5 & 18.0 $\pm$ 1.3 &  24.4 $\pm$ 2.3 & 25.6 $\pm$ 0.2 & 3.6 & 35.41 $\pm$ 0.06 & \textit{n.s.} \\

    & \cmark & $\mathcal{L}_\text{train}$ + {\small $\ell_2$-repel} & 1.175 (1.05$\times$) & 34.6 $\pm$ 0.3 & 32.8 $\pm$ 0.2 & 16.6 $\pm$ 0.2 & 60.8 $\pm$ 0.3 & 43.9 $\pm$ 0.2 & \textbf{54.2} $\pm$ 0.8 & 43.4 $\pm$ 0.2 & 17.6 $\pm$ 0.8 & 24.6 $\pm$ 2.0 & \textbf{25.7} $\pm$ 0.1 & 4.1 & 35.42 $\pm$ 0.11 & \textit{n.s.} \\

    & \cmark & $\mathcal{L}_\text{train}$ + {\small Orthogonalization} & 1.210 (1.08$\times$) & 34.4 $\pm$ 0.1 & 31.2 $\pm$ 1.1 & 16.8 $\pm$ 0.7 & 61.6 $\pm$ 0.0 & \textbf{44.2} $\pm$ 0.4 & 53.4 $\pm$ 0.8 & 44.2 $\pm$ 1.0 & 18.0 $\pm$ 0.3 & 24.8 $\pm$ 3.3 & 25.6 $\pm$ 0.2 & \textbf{\fbox{3.2}} & 35.42 $\pm$ 0.19 & \textit{n.s.} \\

    \rowcolor{lightblue!60}\cellcolor{white} & \cellcolor{white}\cmark & $\mathcal{L}_\text{train}$ + Dispersion loss & 1.176 (1.05$\times$)
    & 34.8 $\pm$ 0.0 & \textbf{34.0} $\pm$ 0.4 & 16.6 $\pm$ 0.0 & \textbf{62.4} $\pm$ 0.3 & \textbf{44.2} $\pm$ 0.1 & 51.6 $\pm$ 0.1 & 42.6 $\pm$ 0.4 & 18.0 $\pm$ 0.8 & \textbf{26.6} $\pm$ 2.3 & 25.4 $\pm$ 0.2 & \textbf{\fbox{3.2}} & \textbf{\fbox{35.61}} $\pm$ 0.12 & \textit{ref.} \\ 

    \midrule
    \texttt{GPT2-m} & \xmark & \multicolumn{1}{c}{---} & \multicolumn{1}{c}{---} & 33.3 & 40.4 & 18.6 & 66.3 & 40.1 & 54.8 & 50.2 & 19.9 & \textbf{29.1} & 25.3 & 5.4 & 37.80 & $p$ < 0.0001 \\

    \cmidrule(l){2-16}
    & \cmark & $\mathcal{L}_\text{train}$ & 2.541 (1.00$\times$) & 33.1 $\pm$ 0.6 & 43.2 $\pm$ 0.3 &  19.1 $\pm$ 0.0 & 67.7 $\pm$ 0.4 & 40.5 $\pm$ 1.3 & 55.5 $\pm$ 1.0 & \textbf{54.2} $\pm$ 2.7 &  18.9 $\pm$ 0.7 & 28.2 $\pm$ 2.3 & 25.1 $\pm$ 0.3 & 4.5 & 38.55 $\pm$ 0.18 & $p$ < 0.01 \\

    & \cmark & $\mathcal{L}_\text{train}$ + noisy embedding & 2.580 (1.01$\times$) &
    33.3 $\pm$ 0.1 & 44.0 $\pm$ 0.3 & 18.6 $\pm$ 0.6 & 66.7 $\pm$ 0.1 & \textbf{44.5} $\pm$ 0.6 & 52.1 $\pm$ 1.6 & 51.7 $\pm$ 1.6 & 19.5 $\pm$ 1.8 & 25.5 $\pm$ 0.1 & 25.8 $\pm$ 0.3 & 4.8 & 38.16 $\pm$ 0.25 & $p$ < 0.01 \\

    & \cmark & $\mathcal{L}_\text{train}$ + active forgetting & 2.580 (1.01$\times$) &
    33.4 $\pm$ 0.3 & 43.0 $\pm$ 0.6 & 18.9 $\pm$ 0.1 & 66.7 $\pm$ 0.1 & 44.3 $\pm$ 0.1 & 50.8 $\pm$ 1.4 & 51.8 $\pm$ 2.5 & 19.8 $\pm$ 1.4 & 26.5 $\pm$ 0.1 & \textbf{26.1} $\pm$ 0.0 & 4.4 & 38.13 $\pm$ 0.29 & $p$ < 0.01 \\

    \cmidrule(lr){2-16}
    & \cmark & $\mathcal{L}_\text{train}$ + {\small Decorrelation} & 2.760 (1.09$\times$) & 33.4 $\pm$ 0.0 & \textbf{45.4} $\pm$ 1.3 & 18.9 $\pm$ 0.8 & 66.6 $\pm$ 0.1 & 42.2 $\pm$ 0.1 & 56.2 $\pm$ 0.4 & 53.8 $\pm$ 3.3 & 18.0 $\pm$ 0.8 & 28.2 $\pm$ 3.0 & 25.4 $\pm$ 0.2 & 4.0 & 38.81 $\pm$ 0.12 & $p$ < 0.01 \\

    & \cmark & $\mathcal{L}_\text{train}$ + {\small $\ell_2$-repel} & 2.675 (1.05$\times$) & \textbf{33.6} $\pm$ 0.8 & 44.2 $\pm$ 0.8 & 18.5 $\pm$ 0.6 & 66.2 $\pm$ 0.1 & 42.4 $\pm$ 0.1 & 54.4 $\pm$ 1.1 & 54.0 $\pm$ 2.0 & 18.6 $\pm$ 0.6 & 28.9 $\pm$ 2.1 & 25.3 $\pm$ 0.4 & 4.7 & 38.61 $\pm$ 0.03 & $p$ < 0.01 \\

    & \cmark & $\mathcal{L}_\text{train}$ + {\small Orthogonalization} & 2.692 (1.06$\times$) & 
    33.0 $\pm$ 0.4 & 45.2 $\pm$ 0.4 & 18.6 $\pm$ 0.6 & \textbf{68.1} $\pm$ 0.4 &  41.4 $\pm$ 0.2 & 55.1 $\pm$ 0.6 & 53.6 $\pm$ 2.7 & 18.4 $\pm$ 0.4 & 29.0 $\pm$ 0.4 & 25.0 $\pm$ 0.1 & 4.8 & 38.74 $\pm$ 0.30 & $p$ < 0.05 \\

    \rowcolor{lightblue!60}\cellcolor{white} & \cellcolor{white}\cmark & $\mathcal{L}_\text{train}$ + Dispersion loss & 2.673 (1.05$\times$) & \textbf{33.6} $\pm$ 0.1 & 45.2 $\pm$ 0.8 & \textbf{19.2} $\pm$ 0.2 & 67.5 $\pm$ 0.1 & 43.4 $\pm$ 0.2 & \textbf{56.4} $\pm$ 1.1 & 53.8 $\pm$ 1.1 & \textbf{20.1} $\pm$ 0.9 & 28.6 $\pm$ 0.1 & 25.7 $\pm$ 0.3 & \textbf{\fbox{2.2}} & 
    \textbf{\fbox{39.35}} $\pm$ 0.15 & \textit{ref.} \\

    \midrule
    \texttt{GPT2-l} & \xmark & \multicolumn{1}{c}{---} & ---
    & 33.6 & 47.8 & 19.6 & 71.6 & 38.9 & 58.4 & 54.0 & 22.4 & 26.2 & 25.6 & --- & 39.81 & --- \\

    \texttt{GPT2-xl} & \xmark & \multicolumn{1}{c}{---} & ---
    & 36.4 & 49.4 & 22.2 & 71.8 & 38.0 & 57.2 & 58.0 & 23.6 & 27.0 & 25.2 & --- & 40.89 & --- \\
    
    \specialrule{0.6pt}{6pt}{6pt}

    \texttt{Qwen3-0.6B} & \xmark & \multicolumn{1}{c}{---} & ---
    & 35.0 & 43.0 & 19.5 & 66.5 & 40.7 & 60.5 & 67.5 & 32.5 & 26.5 & 49.4 & 5.7 & 44.11 & $p$ < 0.0001 \\

    \cmidrule(l){2-16}
    & \cmark & $\mathcal{L}_\text{train}$ & 4.676 (1.00$\times$) &
    32.5 $\pm$ 0.3 & \textbf{52.0} $\pm$ 0.3 & 21.5 $\pm$ 0.5 & \textbf{67.5} $\pm$ 1.4 & 44.3 $\pm$ 0.9 & \textbf{61.0} $\pm$ 0.8 & 68.0 $\pm$ 2.8 & 33.0 $\pm$ 1.3 & 29.5 $\pm$ 0.9 & \textbf{50.0} $\pm$ 0.1 & 4.1 & 45.93 $\pm$ 0.37 & $p$ < 0.01 \\

    & \cmark & $\mathcal{L}_\text{train}$ + noisy embedding & 4.839 (1.03$\times$) &
    34.0 $\pm$ 0.7 & 48.8 $\pm$ 0.4 & 20.5 $\pm$ 0.7 & 66.0 $\pm$ 0.7 & 49.4 $\pm$ 0.1 & 57.5 $\pm$ 0.7 & 67.2 $\pm$ 1.1 & 34.5 $\pm$ 0.7 & 35.0 $\pm$ 1.4 & 48.8 $\pm$ 0.0 & 5.3 & 46.17 $\pm$ 0.28 & $p$ < 0.01 \\

    & \cmark & $\mathcal{L}_\text{train}$ + active forgetting & 4.829 (1.03$\times$) & 
    35.0 $\pm$ 0.7 & 49.0 $\pm$ 4.2 & 20.0 $\pm$ 2.1 & 67.2 $\pm$ 0.4 & 48.6 $\pm$ 0.9 & 58.8 $\pm$ 1.8 & 69.0 $\pm$ 1.4 & 34.2 $\pm$ 1.1 & 36.8 $\pm$ 0.4 & 49.3 $\pm$ 0.0 & 4.0 & 46.79 $\pm$ 0.27 & $p$ < 0.05 \\

    \cmidrule(lr){2-16}
    & \cmark & $\mathcal{L}_\text{train}$ + {\small Decorrelation} & 4.939 (1.06$\times$) & 
    35.0 $\pm$ 0.0 & 50.5 $\pm$ 0.4 & 19.5 $\pm$ 1.8 & \textbf{67.5} $\pm$ 1.1 & 47.0 $\pm$ 2.3 & 59.5 $\pm$ 1.8 & 68.5 $\pm$ 3.2 & \textbf{35.0} $\pm$ 1.8 & 34.0 $\pm$ 0.4 & 49.8 $\pm$ 0.2 & 3.5 & 46.62 $\pm$ 0.25 & $p$ < 0.01 \\

    & \cmark & $\mathcal{L}_\text{train}$ + {\small $\ell_2$-repel} & 4.864 (1.04$\times$) & 
    33.5 $\pm$ 0.0 & 46.0 $\pm$ 2.1 & 19.0 $\pm$ 0.7 & 66.0 $\pm$ 0.4 & 47.4 $\pm$ 0.2 & 59.5 $\pm$ 1.4 & 69.0 $\pm$ 3.2 & \textbf{35.0} $\pm$ 1.8 & \textbf{40.0} $\pm$ 0.4 & 46.9 $\pm$ 0.2 & 4.8 & 46.23 $\pm$ 0.04 & $p$ < 0.001  \\

    & \cmark & $\mathcal{L}_\text{train}$ + {\small Orthogonalization} & 4.936 (1.06$\times$) & 
    \textbf{35.5} $\pm$ 1.1 & 50.0 $\pm$ 0.7 & 22.0 $\pm$ 0.7 & 65.5 $\pm$ 0.4 & 49.1 $\pm$ 0.1 & 56.5 $\pm$ 2.1 & 71.5 $\pm$ 3.2 & 33.0 $\pm$ 1.1 & 36.0 $\pm$ 0.7 & 48.9 $\pm$ 0.2 & 4.2 & 46.80 $\pm$ 0.20 & $p$ < 0.05  \\

    \rowcolor{lightblue!60}\cellcolor{white} & \cellcolor{white}\cmark & $\mathcal{L}_\text{train}$ + Dispersion loss & 4.878 (1.04$\times$) &
    \textbf{35.5} $\pm$ 0.8 & 49.5 $\pm$ 0.8 & \textbf{22.5} $\pm$ 0.6 & 65.0 $\pm$ 1.6 & \textbf{49.8} $\pm$ 1.1 & 58.5 $\pm$ 1.6 & \textbf{72.5} $\pm$ 2.3 & 34.5 $\pm$ 0.8 & 37.5 $\pm$ 1.3 & 49.2 $\pm$ 0.2 & \textbf{\fbox{3.2}} & \textbf{\fbox{47.45}} $\pm$ 0.16 & \textit{ref.} \\ 

    \midrule
    \texttt{Qwen3-1.7B} & \xmark & \multicolumn{1}{c}{---} & ---
    & 39.5 & 53.0 & 29.0 & 71.5 & 47.6 & 60.5 & 72.0 & 50.0 & 45.5 & 63.1 & 5.1 & 53.18 & $p$ < 0.0001 \\

    \cmidrule(l){2-16}
    & \cmark & $\mathcal{L}_\text{train}$ & 9.148 (1.00$\times$) & 40.0 $\pm$ 0.9 & 60.7 $\pm$ 0.3 & 28.5 $\pm$ 0.9 & 74.3 $\pm$ 0.6 & 49.1 $\pm$ 0.2 & 61.3 $\pm$ 0.3 & 71.3 $\pm$ 1.4 & 47.7 $\pm$ 1.2 & 49.7 $\pm$ 1.2 & 63.1 $\pm$ 0.0 & 4.3 & 54.57 $\pm$ 0.17 & $p$ < 0.01 \\

    & \cmark & $\mathcal{L}_\text{train}$ + noisy embedding & 9.345 (1.02$\times$) & 35.0 $\pm$ 0.0 & 60.5 $\pm$ 2.8 & 28.0 $\pm$ 0.0 & 70.5 $\pm$ 0.7 & \textbf{50.3} $\pm$ 0.9 & 60.8 $\pm$ 3.2 & 73.2 $\pm$ 0.4 & 48.5 $\pm$ 0.0 & 48.5 $\pm$ 2.1 & 62.5 $\pm$ 0.6 & 5.0 & 53.78 $\pm$ 0.07 & $p$ < 0.0001 \\

    & \cmark & $\mathcal{L}_\text{train}$ + active forgetting & 9.298 (1.02$\times$) & 35.2 $\pm$ 0.4 & 59.5 $\pm$ 1.4 & 27.5 $\pm$ 0.7 & 72.5 $\pm$ 0.0 & 49.9 $\pm$ 0.2 & 60.5 $\pm$ 1.4 & \textbf{74.2} $\pm$ 0.4 & 47.2 $\pm$ 1.1 & 49.0 $\pm$ 1.4 & 63.2 $\pm$ 0.2 & 5.0 & 53.89 $\pm$ 0.07 & $p$ < 0.001 \\

    \cmidrule(lr){2-16}
    & \cmark & $\mathcal{L}_\text{train}$ + {\small Decorrelation} & 9.535 (1.04$\times$) & 39.5 $\pm$ 1.5 & 60.7 $\pm$ 1.0 & 28.3 $\pm$ 1.2 & 74.8 $\pm$ 1.0 & 49.7 $\pm$ 0.5 & 61.5 $\pm$ 0.9 & 72.7 $\pm$ 1.5 & 49.7 $\pm$ 0.8 & 49.7 $\pm$ 1.0 & 63.5 $\pm$ 0.2 & 3.1 & 55.01 $\pm$ 0.13 & $p$ < 0.01 \\

    & \cmark & $\mathcal{L}_\text{train}$ + {\small $\ell_2$-repel} & 9.440 (1.03$\times$) & 33.5 $\pm$ 0.0 & 47.8 $\pm$ 0.6 & 23.0 $\pm$ 0.0 & 67.8 $\pm$ 1.2 & 42.1 $\pm$ 0.2 & 57.5 $\pm$ 0.5 & 68.3 $\pm$ 1.9 & 41.3 $\pm$ 1.8 & 38.5 $\pm$ 2.6 & 42.3 $\pm$ 0.5 & 8.0 & 46.22 $\pm$ 0.57 & $p$ < 0.0001 \\

    & \cmark & $\mathcal{L}_\text{train}$ + {\small Orthogonalization} & 9.565 (1.05$\times$) & \textbf{40.8} $\pm$ 1.3 & 61.0 $\pm$ 0.5 & \textbf{29.2} $\pm$ 0.3 & 74.8 $\pm$ 1.3 & 49.5 $\pm$ 0.2 & 61.3 $\pm$ 0.8 & 72.3 $\pm$ 1.9 & 48.2 $\pm$ 1.6 & 49.5 $\pm$ 1.3 & 63.7 $\pm$ 0.0 & 3.0 & 55.04 $\pm$ 0.26 & $p$ < 0.05 \\

    \rowcolor{lightblue!60}\cellcolor{white} & \cellcolor{white}\cmark & $\mathcal{L}_\text{train}$ + Dispersion loss & 9.455 (1.03$\times$) & 
    40.2 $\pm$ 2.0 & \textbf{62.2} $\pm$ 0.8 & 29.0 $\pm$ 1.0 & \textbf{75.2} $\pm$ 0.8 & 50.2 $\pm$ 0.4 & \textbf{62.7} $\pm$ 1.2 & 73.0 $\pm$ 1.3 & \textbf{49.3} $\pm$ 0.8 & \textbf{51.0} $\pm$ 1.8 & \textbf{64.1} $\pm$ 0.2 & \textbf{\fbox{1.7}} & \textbf{\fbox{55.68}} $\pm$ 0.21 & \textit{ref.} \\ 

    \midrule
    \texttt{Qwen3-4B} & \xmark & \multicolumn{1}{c}{---} & ---
    & 41.5 & 60.5 & 27.5 & 75.0 & 52.8 & 67.0 & 80.0 & 58.5 & 55.5 & 73.1 & --- & 59.14 & --- \\

    \texttt{Qwen3-8B} & \xmark & \multicolumn{1}{c}{---} & ---
    & 49.5 & 67.5 & 33.0 & 77.5 & 54.7 & 71.0 & 86.0 & 63.5 & 56.0 & 78.2 & --- & 63.68 & --- \\

    \texttt{Qwen3-14B} & \xmark & \multicolumn{1}{c}{---} & ---
    & 54.0 & 65.5 & 35.5 & 77.0 & 54.7 & 74.5 & 86.0 & 68.5 & 62.0 & 81.7 & --- & 65.93 & --- \\
    \bottomrule

    \end{tabular}
}
\label{tab:results_midtraining}
\vspace{-4pt}
\end{table*}

\begin{table*}[!th]
\caption{Dispersion loss is more effective when applied to deeper layers, where embedding condensation is more pronounced. Experiments are performed under the \texttt{GPT2} mid-training setting.}
\vspace{-4pt}
\centering
\setlength{\tabcolsep}{4pt}
\resizebox{\textwidth}{!}{%
    \begin{tabular}{cc cccccccccccccc}
    \toprule
    & \multicolumn{6}{c}{Zero-shot} & \multicolumn{4}{c}{Few-shot} & \multirow{2}{*}{\textbf{Average}$\uparrow$} & $t$-test \\
    
    \cmidrule(lr){2-7} \cmidrule(lr){8-11}
    \multicolumn{1}{c}{Dispersion location}
    & \texttt{ANLI$_\text{R2}$}$\uparrow$
    & \texttt{LAMBADA$_\text{openai}$}$\uparrow$
    & \texttt{OpenbookQA}$\uparrow$
    & \texttt{PIQA}$\uparrow$
    & \texttt{TruthfulQA}$\uparrow$
    & \texttt{WinoGrande}$\uparrow$
    & \texttt{ARC$_\text{easy}$}$\uparrow$
    & \texttt{ARC$_\text{challenge}$}$\uparrow$
    & \texttt{MedMCQA}$\uparrow$
    & \texttt{MMLU}$\uparrow$ \\

    \midrule
    Layers $\in [1, \frac{L}{2}]$
    & 34.6 $\pm$ 0.3 & 33.0 $\pm$ 0.4 & 16.7 $\pm$ 0.6 & 60.9 $\pm$ 0.4 & 43.3 $\pm$ 0.5 & \textbf{52.4} $\pm$ 0.3 & 43.7 $\pm$ 0.3 & \textbf{17.9} $\pm$ 1.3 & 21.9 $\pm$ 1.0 & 25.3 $\pm$ 0.6 & 34.96 $\pm$ 0.20 & $p$ < 0.05 \\

    Layers $\in [\frac{L}{2}, L]$
    & \textbf{34.9} $\pm$ 0.4 & \textbf{33.3} $\pm$ 0.6 & \textbf{16.9} $\pm$ 0.1 & \textbf{61.3} $\pm$ 0.8 & \textbf{43.9} $\pm$ 0.1 & 52.2 $\pm$ 1.1 & \textbf{43.8} $\pm$ 1.2 & 17.8 $\pm$ 0.9 & \textbf{23.7} $\pm$ 2.4 & \textbf{25.8} $\pm$ 0.1 & \textbf{\fbox{35.37}} $\pm$ 0.06 & \textit{ref.} \\
    \bottomrule
    \end{tabular}
}
\label{tab:results_early_late_layer}
\vspace{-8pt}
\end{table*}

\begin{table}[!tb]
\caption{Effect of hyperparameters on the dispersion loss. Ablation experiments are performed under the \texttt{GPT2} mid-training setting.}
\vspace{-4pt}
\centering
\setlength{\tabcolsep}{2pt}
\resizebox{0.45\textwidth}{!}{%
    \begin{tabular}{cccc}
    \toprule
    \multirow{2}{*}{Loss} & ~~Coefficient~~ & ~~Temperature~~ & \multirow{2}{*}{\textbf{Average} (same 10 tasks) $\uparrow$} \\
    & $\lambda_\text{disp}$ & $\tau$ \\
    \midrule

    \rowcolor{lightblue!60}\cellcolor{white} $\mathcal{L}_\text{train}$ + Dispersion loss & 0.1 & 1.0 & \textbf{\fbox{35.61}} \\
    \cmidrule(l){2-4}
    & 0.01 & 1.0 & 35.36 \\
    & 0.5 & 1.0 & 35.37 \\
    & 1.0 & 1.0 & 35.29 \\
    \cmidrule(l){2-4}
    & 0.1 & 0.1 & 35.33 \\
    & 0.1 & 0.5 & 35.42 \\
    & 0.1 & 2.0 & 35.27 \\
    \bottomrule
    \end{tabular}
}
\label{tab:ablation_midtraining_GPT2}
\vspace{-8pt}
\end{table}

\vspace{-4pt}
\subsection{Dispersion loss is effective in mid-training}
\label{sec:mid_training}

Next, we evaluate whether the geometric improvements induced by the dispersion loss translate into better downstream performance. We report zero-shot and few-shot results on 10 language understanding benchmarks for models before and after mid-training. Results for the \texttt{GPT2} and \texttt{Qwen3} models are reported in Table~\ref{tab:results_midtraining}. Besides our dispersion loss and alternative formulations, we compared against other simple ways that could potentially counteract the embedding condensation phenomenon, namely ``noisy embedding'' which adds noise to the embedding vectors during training, inspired by~\cite{jain2024neftune}, and ``active forgetting'' which resets the token embedding layer periodically during training, inspired by~\cite{active_forgetting}.

Models mid-trained with dispersion loss consistently outperform those trained with the default loss, yielding improvements across most tasks and model sizes. Although the absolute gains are modest, they are systematic and consistent, further supporting the link between reduced condensation and improved generalization. Among all compared methods, the proposed dispersion loss delivers the strongest and most stable performance across tasks, achieving the highest average improvement across multiple model sizes~(Table~\ref{tab:results_midtraining}). Interestingly, active forgetting~\cite{active_forgetting} is also fairly competitive, despite not being originally designed for this purpose.

The three alternative formulations of dispersion loss also yield gains in some settings, but are generally less stable or slightly weaker on average, motivating our focus on the canonical dispersion loss in subsequent experiments. $\ell_2$-repel is the most volatile, likely due to the fragile balance between pairwise dispersion and norm regularization: if the former dominates, the norm explodes; if the latter dominates, the result is collapse rather than dispersion. Decorrelation is less effective than dispersion loss likely because it penalizes feature-channel correlations and is less direct than reducing angular alignment. Orthogonalization is also less flexible, as it does not incentivize angular separation beyond 90 degrees.

\vspace{-4pt}
\paragraph{Differential effect on early and late layers}
To further examine whether dispersion loss improves performance via counteracting embedding condensation, we analyze its layer-wise impact. Motivated by the observation that condensation becomes more pronounced in deeper layers, we apply dispersion loss selectively to either the first half or the last half of the model. We find that applying the loss to the later layers yields greater performance improvements than applying it to the earlier layers (Table~\ref{tab:results_early_late_layer}). This result provides additional evidence that the empirical benefits of dispersion loss arise from mitigating the embedding condensation phenomenon.

\vspace{-4pt}
\paragraph{Ablation studies on hyperparameters}
We then conduct ablation studies to assess the sensitivity of the proposed dispersion loss to its main hyperparameters, namely the weighting coefficient $\lambda_{\text{disp}}$ and the temperature parameter $\tau$, using mid-training experiments on \texttt{GPT2}~(Table~\ref{tab:ablation_midtraining_GPT2}). The average score across the same 10 language understanding benchmarks is reported.

In general, we find that the dispersion loss is relatively robust to the choice of $\lambda_{\text{disp}}$ and $\tau$. Based on these results, we adopt $\lambda_{\text{disp}} = 0.1$ and $\tau = 1.0$ as default settings in subsequent experiments.

\vspace{-4pt}
\subsection{Dispersion loss is effective in pre-training}
\label{sec:pre_training}

Finally, we evaluate the effect of dispersion loss when incorporated throughout full pre-training. Following the insights obtained from the mid-training experiments, we perform pre-training from scratch using the \texttt{Qwen3-0.6B} model on the \texttt{allenai/c4} corpus with 640 GPUs. Pre-training with dispersion loss leads to an average improvement of $+1.17$ points in downstream evaluation metrics, including $+4.0$ points on \texttt{PIQA}, and $+7.4$ points on \texttt{TruthfulQA}, indicating that encouraging embedding dispersion during representation formation is beneficial as we anticipated. These gains are achieved over a diverse set of language understanding tasks without any task-specific post-training, suggesting improved generalization. These results demonstrate that incorporating dispersion loss throughout pre-training provides a principled and effective mechanism for counteracting embedding condensation.

We observe that dispersion is generally more helpful for knowledge-heavy or long-context reasoning tasks like \texttt{TruthfulQA} and \texttt{MedMCQA}. In these tasks, we expect the models to benefit from less collapsed, more distinguishable contextual token directions. In contrast, tasks such as \texttt{WinoGrande} rely more on common sense inference and might benefit less from representation dispersion.

\begin{table*}[!th]
\caption{Using dispersion loss during pre-training improves performance on language tasks. Experiments are performed under the \texttt{Qwen3-0.6B} pre-training setting.}
\vspace{-6pt}
\centering
\setlength{\tabcolsep}{4pt}
\resizebox{\textwidth}{!}{%
    \begin{tabular}{lc cccccccccccccc}
    \toprule
    & \multicolumn{6}{c}{Zero-shot} & \multicolumn{4}{c}{Few-shot} & \multirow{2}{*}{\textbf{Average}$\uparrow$}\\
    
    \cmidrule(lr){2-7} \cmidrule(lr){8-11}
    \multicolumn{1}{c}{Loss}
    & \texttt{ANLI$_\text{R2}$}$\uparrow$
    & \texttt{LAMBADA$_\text{openai}$}$\uparrow$
    & \texttt{OpenbookQA}$\uparrow$
    & \texttt{PIQA}$\uparrow$
    & \texttt{TruthfulQA}$\uparrow$
    & \texttt{WinoGrande}$\uparrow$
    & \texttt{ARC$_\text{easy}$}$\uparrow$
    & \texttt{ARC$_\text{challenge}$}$\uparrow$
    & \texttt{MedMCQA}$\uparrow$
    & \texttt{MMLU}$\uparrow$ \\

\midrule
    $\mathcal{L}_\text{train}$ 
    & \textbf{34.0} & 24.0 & \textbf{15.5} & 64.5 & 37.8 & \textbf{58.0} & \textbf{41.5} & 22.0 & 26.5 & 24.6 & 34.84 \\

    \rowcolor{lightblue!60} $\mathcal{L}_\text{train}$ + Dispersion loss
    & 32.0 & \textbf{27.5} & 13.5 & \textbf{68.5} & \textbf{45.2} & 54.5 & 40.0 & \textbf{24.5} & \textbf{29.5} & \textbf{24.9} & \textbf{\fbox{36.01}}~\textcolor{forestgreen}{$_{(+1.17)}$} \\
    \bottomrule
    \end{tabular}
}
\label{tab:results_pretraining_Qwen3}
\vspace{-16pt}
\end{table*}

\vspace{-6pt}
\section{Related Works}
\paragraph{Analyses of condensation}
Phenomena consistent with what we term embedding condensation have appeared in prior analyses of Transformer representations, though typically in indirect or task-specific forms. Existing studies have characterized related behaviors using a variety of measures, including output matrix rank~\cite{shi2022revisiting}, covariance matrix rank~\cite{li2025tracing}, distance to rank-1 subspaces measured by the Frobenius norm~\cite{dong2021attention}, spectral bias between high- and low-frequency components~\cite{wang2022anti}, singular values~\cite{zhang2025mitigating}, entropy~\cite{liao2024assessing, tokarchuk2026representation}, spherical variance~\cite{tokarchuk2026representation}, and the proportion of variance explained by the principal components~\cite{ethayarajh2019contextual}. These metrics offer complementary views of representation collapse across layers. In this work, we used the layer-by-layer dynamics in pairwise cosine similarity between token embeddings as a direct and interpretable measure for tracking embedding condensation during training. For a broader overview of related representation degeneration phenomena, we refer readers to the survey of~\cite{dovonon2024setting}.

Previous work has attributed this phenomenon to several factors: oversmoothing induced by layer normalization under analogies between Transformers and graph neural networks~\cite{shi2022revisiting}; the effects of specific components such as self-attention and MLPs~\cite{dong2021attention}; the distribution of embeddings at the infinite depth limit~\cite{Math_Transformer}; and the eigenspectrum of the Transformer update~\cite{dovonon2024setting}.

Prior attempts to mitigate these effects have largely focused on architectural or parameterization changes, such as aggregating representations across layers~\cite{shi2022revisiting}, reparameterizing updates via eigendecomposition~\cite{dovonon2024setting}, or rebalancing frequency components~\cite{wang2022anti}. In contrast, our work targets embedding condensation directly through an explicit representation-level regularization objective.

\vspace{-12pt}
\paragraph{Representation shaping via embedding regularization}
In representation learning, training objectives are often designed to shape the latent space toward desirable geometric or relational properties. For instance, contrastive learning structures representations by enforcing similarity between positive pairs and separation between negative pairs~\cite{oord2018representation, chen2020simple, liu2024cuts, sun2025geometry, he2019momentum, chen2020improved, liu2025diffkillr, liu2025imageflownet, givechian2025immunostruct}; geometry-aware regularization methods promote well-behaved latent structures through explicit geometric constraints~\cite{wang2020understanding, liao2025rnagenscape, sun2024geometry, verma2018manifold}; REPA~\cite{yu2024representation} improves generative quality by aligning representations learned by generative models with those of pretrained understanding models. Closest in spirit, the ``diffuse and disperse'' framework~\cite{diffuse_and_disperse} and \citet{tokarchuk2025keep} introduce dispersive objectives. In a concurrent work, Li et al.~\cite{li2026predictive} study model-level dispersion among hidden representations in language models, and train with objectives that increase dispersion to reduce language model perplexity. Our work instead centers on layer-wise embedding condensation as depth and scale evolve and evaluates dispersion-aware training for models on broad understanding benchmarks.

\vspace{-6pt}
\section{Conclusion}
We presented an empirical study of embedding geometry in Transformer models and identified embedding condensation as a pervasive phenomenon that disproportionately affects smaller models. By introducing a dispersion-aware training objective, we showed that embedding geometry can be directly regulated during training, leading to more diverse embedding vector directions and consistent performance improvements in small language models without scaling up the model size. Our findings suggest that geometric properties of representations are an important and previously underexplored axis for understanding and improving Transformer models. We hope that our work will motivate further investigation into geometry-aware objectives as a complementary approach to scaling in areas including but not limited to language modeling.

\section*{Impact Statement}
This paper studies the geometry of token representations in Transformer-based language models and introduces a dispersion-based regularization objective to counteract embedding condensation. The proposed method is a training-time modification that improves generalization in small language models without changing model architecture or increasing parameter count.

As a representation-level technique, this work does not introduce new application domains or deployment mechanisms. Its potential impact is indirect and mediated by downstream use of language models trained with dispersion-aware objectives. While improved generalization in smaller models may reduce reliance on larger models, the societal implications of such improvements depend on the specific tasks, data, and deployment contexts chosen by future users.

We do not anticipate new ethical risks arising specifically from the proposed loss formulation beyond those already associated with language model training and evaluation.

\section*{Acknowledgements}

S.K. is funded by the NIH (NIGMSR01GM135929, R01GM130847), NSF CAREER award IIS-2047856, NSF IIS-2403317, NSF DMS-2327211 and NSF CISE-2403317. S.K is also funded by the Sloan Fellowship FG-2021-15883, the Novo Nordisk grant GR112933.

This manuscript was co-authored by Oak Ridge National Laboratory (ORNL), operated by UT-Battelle, LLC under Contract No.~DE-AC05-00OR22725 with the U.S. Department of Energy. Any subjective views or opinions expressed in this paper do not necessarily represent those of the U.S. Department of Energy or the United States Government.

\bibliography{references}
\bibliographystyle{icml2026}

\renewcommand{\thefigure}{S\arabic{figure}}
\renewcommand{\theHfigure}{S\arabic{figure}}
\setcounter{figure}{0}
\renewcommand{\thetable}{S\arabic{table}}
\renewcommand{\theHtable}{S\arabic{table}}
\setcounter{table}{0}

\clearpage
\newpage

\renewcommand\appendixpagename{\centering\noindent\rule{\textwidth}{2pt} \LARGE Technical Appendices \\ \normalsize \noindent\rule{\textwidth}{1pt}}

\begin{appendices}

\appendix
\onecolumn
\appendixpage

\vspace{24pt}

\Large \textbf{Table of Contents} \normalsize
\startcontents[sections]
\printcontents[sections]{l}{1}{\setcounter{tocdepth}{2}}
\vspace{36pt}

\clearpage
\newpage
\section{Pseudocode for Dispersion Loss}
\label{sec:pseudo_code}

\definecolor{codeblue}{rgb}{0.25,0.5,0.5}
\definecolor{codekw}{rgb}{0.85, 0.18, 0.50}
\definecolor{codesign}{RGB}{0, 0, 255}
\definecolor{codefunc}{rgb}{0.85, 0.18, 0.50}

\lstdefinelanguage{PythonFuncColor}{
  language=Python,
  keywordstyle=\color{codekw}\bfseries,
  commentstyle=\color{codeblue},
  stringstyle=\color{orange},
  showstringspaces=false,
  basicstyle=\ttfamily\small,
  columns=fullflexible,
  breaklines=true,
  literate=
    {*}{{\color{codesign}*\ }}1
    {-}{{\color{codesign}-\ }}1
    {+}{{\color{codesign}+\ }}1
    {@}{{\color{codesign}@\ }}1
    {bool}{{\color{codefunc}bool}}1
    {mean}{{\color{codefunc}mean}}1
    {exp}{{\color{codefunc}exp}}1
    {log}{{\color{codefunc}log}}1
    {logsumexp}{{\color{codefunc}logsumexp}}1
    {clamp}{{\color{codefunc}clamp}}1
    {arccos}{{\color{codefunc}arccos}}1
    {transpose}{{\color{codefunc}transpose}}1
    {linalg_norm}{{\color{codefunc}linalg\_norm}}1
    {logit}{logit}1
    {log-sum-exp}{log-sum-exp\ }1
}

\begin{algorithm}[!h]
\caption{Dispersion Loss}
\label{alg:dispersion_loss}
\begin{lstlisting}[language=PythonFuncColor]
# z: Token embeddings
# tau: Temperature
# eps: Numerical stability constant
# thr: Clamp threshold

# [batch size, sequence length, feature dimension]
B, L, F = z.shape

# normalize embeddings along feature dimension
z_norm = z / (linalg_norm(z, dim=2, keepdim=True) + eps)

# cosine similarity matrix with shape [B, L, L]
cossim = z_norm @ transpose(z_norm, dim1=1, dim2=2)

# Clamp to avoid negative inf gradient at the two extrema.
# Also saturate the gradient at the boundary to avoid large gradient.
cossim_clmp = clamp(cossim, -1 + thr, 1 - thr)

# Clamp gives 0 gradient beyond the boundary.
# We force same gradient as the boundary instead.
cossim_clmp = cossim + (cossim_clmp - cossim).detach()

# distance matrix with shape [B, L, L]
D = arccos(cossim_clmp) / pi

# mask out diagonal entries
mask = eye(L).bool()
D = D[:, ~mask]

# compute logit
logit = -D / tau

# log-sum-exp trick for `log(mean(exp(logit)))`
loss = logsumexp(logit + eps, dim=1) - log(L * (L - 1))

# dispersion loss
loss = mean(loss)
\end{lstlisting}
\end{algorithm}

\clearpage
\newpage
\section{Experimental Settings}

\subsection{Settings and hyperparameters for training and evaluation}
\label{sec:setting_and_hyperparams}

The settings and hyperparameters are summarized in Table~\ref{tab:supp_experimental_setting}.

\renewcommand{\arraystretch}{1.2}
\begin{table}[!hb]
\caption{Settings and hyperparameters for training and evaluation.}
\vspace{-4pt}
\centering
\setlength{\tabcolsep}{6pt}
\resizebox{0.95\textwidth}{!}{%
    \begin{tabular}{lcccccc}
    \toprule
    & Highly controlled experiment & \texttt{GPT2} mid-training & \texttt{GPT2-m} mid-training & \texttt{Qwen3-0.6B} mid-training & \texttt{Qwen3-1.7B} mid-training & \texttt{Qwen3-0.6B} pre-training \\
    & (Section~\ref{sec:controlled_experiment}) & (Section~\ref{sec:mid_training}) & (Section~\ref{sec:mid_training}) & (Section~\ref{sec:mid_training}) & (Section~\ref{sec:mid_training}) & (Section~\ref{sec:pre_training}) \\
    \midrule
    \textbf{Training}\\
    Optimizer & \multicolumn{6}{c}{-------------------------------------------------------------------------------~~AdamW~\cite{AdamW}~~-----------------------------------------------------------------------------} \\
    Learning rate scheduler & \multicolumn{5}{c}{------------------------------------------------~~Cosine decay with warmup~\cite{SGDR}~~------------------------------------------------} & Linear decay with warmup \\
    Learning rate & $1 \times 10^{-5}$ & $5 \times 10^{-5}$ & $5 \times 10^{-5}$ & $2 \times 10^{-5}$ & $5 \times 10^{-6}$ & $5 \times 10^{-5}$ \\
    Dataset & \texttt{codelion/fineweb-edu-1B} & \multicolumn{4}{c}{-------------------------------------------~~\texttt{Salesforce/wikitext}~~-------------------------------------------} & \texttt{allenai/c4} \\
    GPU count per job & 1 & 1 & 1 & 1 & 1 & 640 \\
    Training duration (tokens) & 1~B & 200~M & 200~M & 200~M & 200~M & 156~B \\
    Batch size per device & 32 & 32 & 16 & 4 & 2 & 2 \\
    Gradient accumulation step & 16 & 4 & 8 & 8 & 16 & 1 \\
    Context length (tokens) & 1024 & 1024 & 1024 & 4096 & 4096 & 4096 \\
    Effective batch size (sequences) & 512 & 128 & 128 & 32 & 32 & 1280 \\
    Effective batch size (tokens) & 524288 & 131072 & 131072 & 131072 & 131072 & 5242880 \\
    \midrule
    \textbf{Evaluation}\\
    Zero-shot examples & --- & 0 & 0 & 0 & 0 & 0 \\
    Few-shot examples & --- & 1 & 1 & 5 & 5 & 5 \\
    Max generation length (tokens) & --- & 256 & 256 & 1024 & 1024 & 1024 \\
    \bottomrule
    \end{tabular}
}
\label{tab:supp_experimental_setting}
\end{table}
\renewcommand{\arraystretch}{1.0}

\subsection{Evaluation benchmarks}
\label{sec:benchmark_description}

\texttt{ANLI$_\text{R2}$}\\The Adversarial Natural Language Inference (ANLI)~\cite{ANLI} (\url{https://huggingface.co/datasets/facebook/anli}) is a new large-scale NLI benchmark. The dataset is collected via an iterative, adversarial human-and-model-in-the-loop procedure.

\texttt{LAMBADA$_\text{openai}$}\\The LAMBADA benchmark~\cite{LAMBADA} (\url{https://huggingface.co/datasets/EleutherAI/lambada_openai}) is a collection of narrative texts sharing the characteristic that human subjects are able to guess their last word if they are exposed to the whole text, but not if they only see the last sentence preceding the target word. To succeed on LAMBADA, computational models cannot simply rely on local context, but must be able to keep track of information in the broader discourse.

\texttt{OpenbookQA}\\The OpenBookQA benchmark~\cite{OpenbookQA} (\url{https://huggingface.co/datasets/allenai/openbookqa}) is a question-answering dataset that contains questions that require multi-step reasoning, use of additional common and commonsense knowledge, and rich text comprehension.

\texttt{PIQA}\\The PIQA benchmark~\cite{PIQA} (\url{https://huggingface.co/datasets/ybisk/piqa}) introduces the task of physical commonsense reasoning, a major challenge on the road to true AI-completeness, including robots that interact with the world and understand natural language.

\texttt{TruthfulQA}\\The TruthfulQA benchmark~\cite{TruthfulQA} (\url{https://huggingface.co/datasets/domenicrosati/TruthfulQA}) is a benchmark to measure whether a language model is truthful in generating answers to questions. The benchmark comprises 817 questions that span 38 categories, including health, law, finance and politics.

\texttt{WinoGrande}\\The WinoGrande benchmark~\cite{WinoGrande} (\url{https://huggingface.co/datasets/allenai/winogrande}) is a collection of 44k problems formulated as a fill-in-a-blank task with binary options.

\texttt{ARC$_\text{easy}$} and \texttt{ARC$_\text{challenge}$}\\The ARC benchmark~\cite{ARC} (\url{https://huggingface.co/datasets/allenai/ai2_arc}) consists of 7,787 genuine grade-school level, multiple-choice science questions, assembled to encourage research in advanced question-answering. The dataset is partitioned into a Challenge Set and an Easy Set, where the former contains only questions answered incorrectly by both a retrieval-based algorithm and a word co-occurrence algorithm.

\texttt{MedMCQA}\\The MedMCQA benchmark~\cite{MedMCQA} (\url{https://huggingface.co/datasets/openlifescienceai/medmcqa}) is a large-scale, Multiple-Choice Question Answering (MCQA) dataset designed to address real-world medical entrance exam questions. It contains more than 194k high-quality AIIMS \& NEET PG entrance exam MCQs covering 2.4k healthcare topics and 21 medical subjects are collected with an average token length of 12.77 and high topical diversity.

\texttt{MMLU}\\The MMLU benchmark~\cite{MMLU, MMLU_ETHICS} (\url{https://huggingface.co/datasets/cais/mmlu})
is a multitask dataset consisting of multiple-choice questions on 57 tasks. It spans subjects in the humanities, social sciences, hard sciences, and other areas that are important for some people to learn.

\clearpage
\newpage
\section{Additional Results on Embedding Condensation}

\subsection{Embedding condensation results on \texttt{wikitext-103}}

We found consistent trends on embedding condensation across the following model families: \texttt{GPT2}~\cite{GPT2}, \texttt{Qwen1}~\cite{Qwen1}, \texttt{Qwen2.5}~\cite{Qwen2.5}, \texttt{Qwen3}~\cite{Qwen3} and \texttt{Bloom}~\cite{Bloom}, as shown in Figure~\ref{fig:supp_observation}. This figure shows the raw results we used in Figures~\ref{fig:observation}.

\begin{figure*}[!th]
\centering
\includegraphics[width=\linewidth]{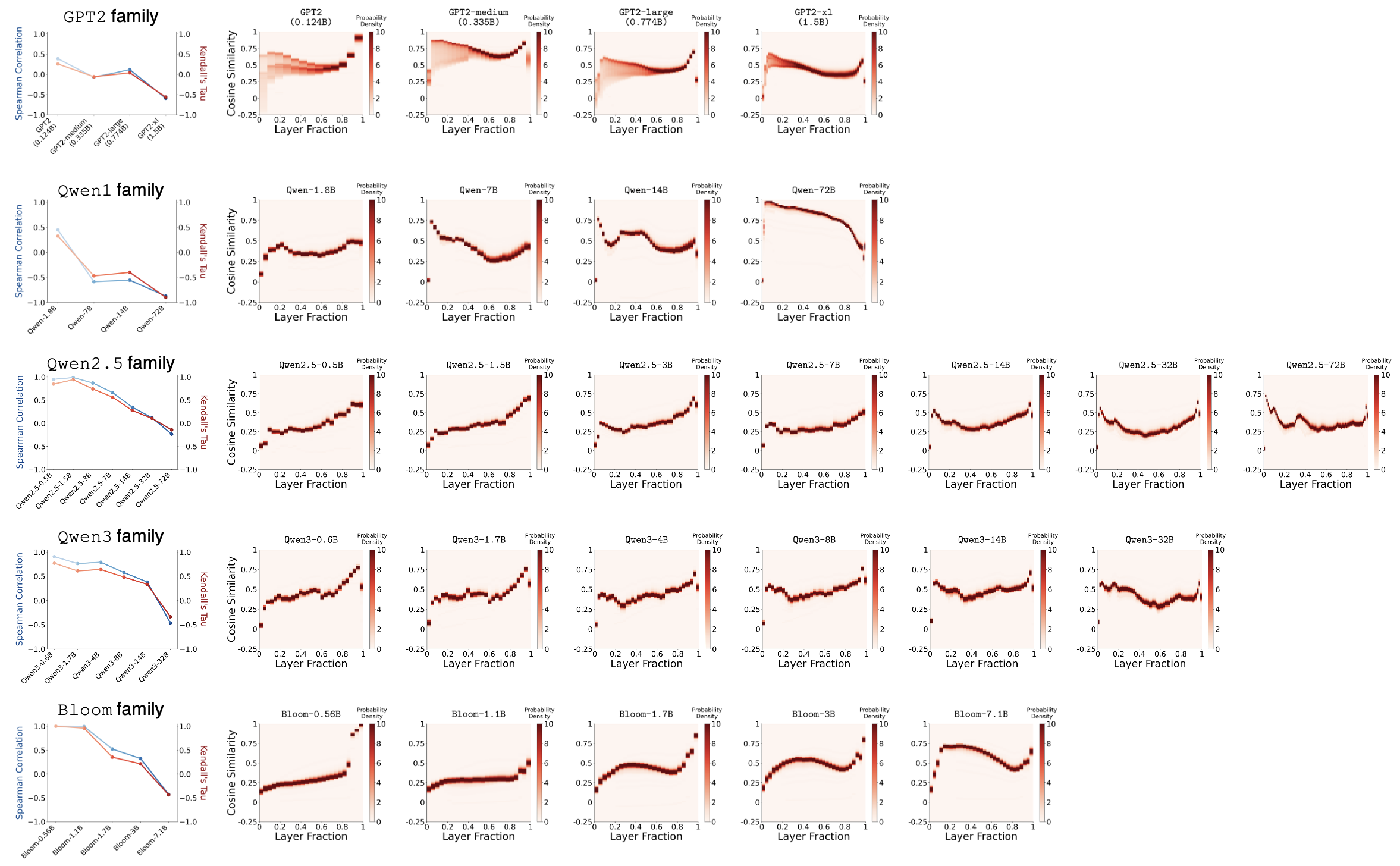}
\caption{Additional quantitative and qualitative evaluations on \texttt{GPT2}, \texttt{Qwen1}, \texttt{Qwen2.5}, \texttt{Qwen3} and \texttt{Bloom} families all demonstrate consistent trends that \textbf{within each model family, larger models are less susceptible to the embedding condensation phenomenon}.}
\label{fig:supp_observation}
\end{figure*}

\clearpage
\newpage
\subsection{Embedding condensation results on \texttt{pubmed\_qa}, \texttt{imdb}, and \texttt{squad}}
\label{sec:condensation_other_datasets}

\begin{figure*}[!th]
\centering
\includegraphics[width=\linewidth]{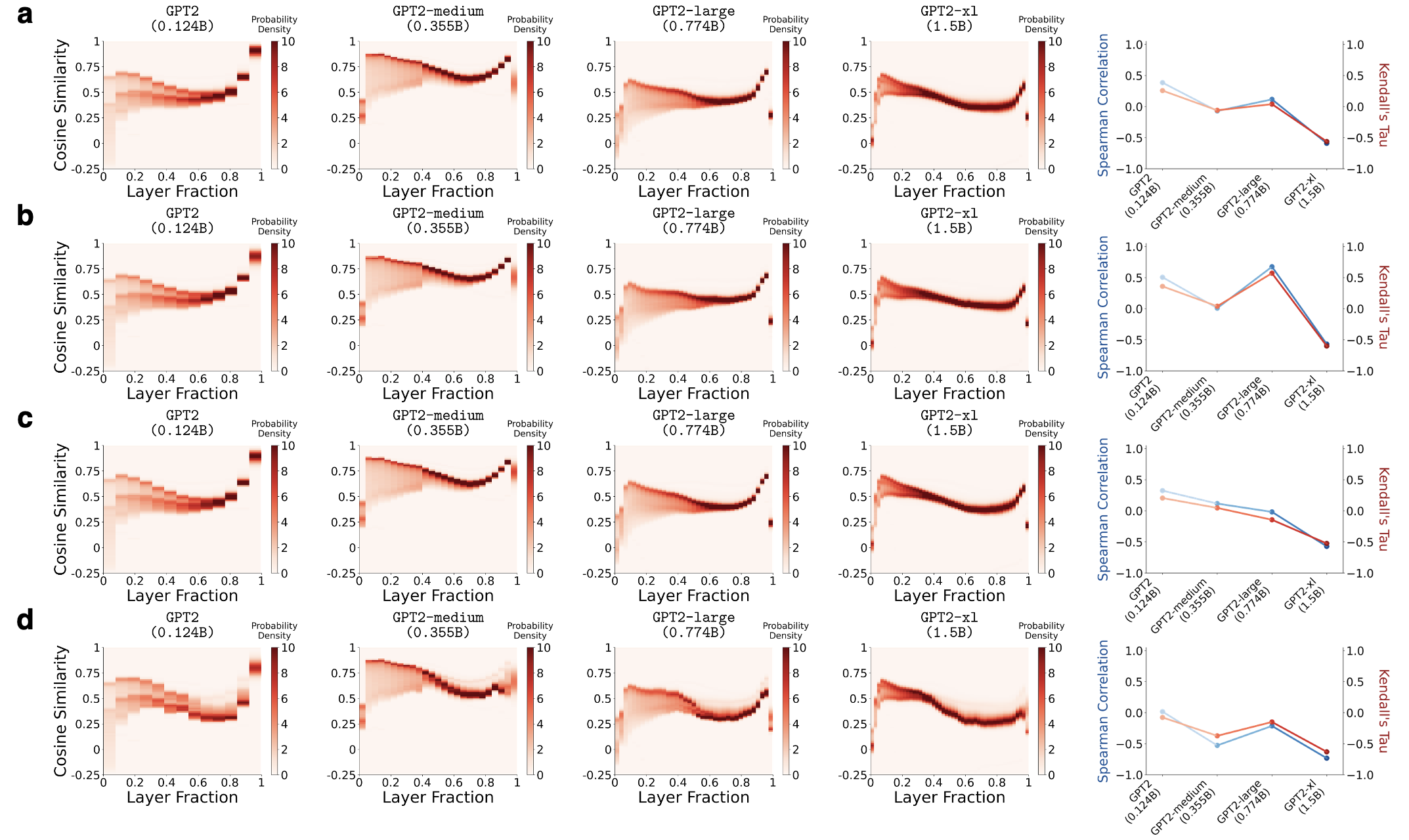}
\caption{The embedding condensation effect is consistent regardless of the input text dataset. Results are shown for four datasets, namely \textbf{(a)} \texttt{wikitext}, \textbf{(b)} \texttt{pubmed\_qa}, \textbf{(c)} \texttt{imdb}, and \textbf{(d)} \texttt{squad}.}
\label{fig:supp_change_dataset}
\end{figure*}

\vspace{24pt}

\subsection{Embedding condensation results on \texttt{wikitext-103}, evaluated using average \texttt{cossim} of last few layers}
\label{sec:condensation_average_cossim}

\begin{figure*}[!th]
\centering
\includegraphics[width=\linewidth]{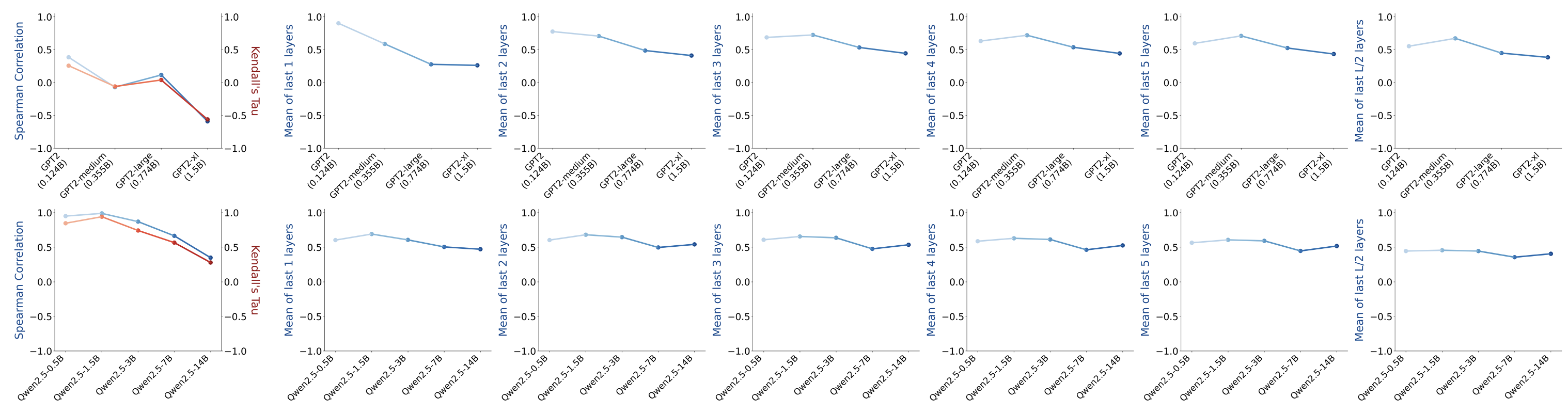}
\caption{The average cosine similarity of the last few layers is also an effective metric that captures the embedding condensation phenomenon. However, the Spearman correlation and Kendall's Tau that we use in this work are able to show the trend more clearly.}
\label{fig:supp_change_metric}
\end{figure*}

\clearpage
\newpage
\section{Embedding condensation and embedding dimension}
\label{sec:dimension}

It is widely known that when the dimension increases, random vectors are more likely to be orthogonal to each other~\cite{random_angle_high_dim}. As a result, we ask the following question.

\begin{tcolorbox}[
    colback=natureblue!10,
    boxrule=1pt,
    arc=8pt,
    left=4pt,
    right=4pt,
    top=0pt,
    bottom=0pt
]
\textbf{Q:} To what extent does increased embedding dimensionality alone account for the reduced embedding condensation observed in larger models, independent of training dynamics?
\end{tcolorbox}

To answer this question, we provide the following theoretical results on the expected values of cosine similarity between two vectors $x, y \in \mathbb{R}^d$ if the dimension increases from $d$ to $D$. We consider two idealized mechanisms for increasing embedding dimension, which serve as geometric reference cases rather than models of actual training dynamics.

Note that the cosine similarity between $x$ and $y$ at dimension $d$ is $\mathrm{cossim}_d(x,y) = \frac{x^\top y}{\lVert x \rVert \cdot \lVert y \rVert}$. Assume $\mathrm{cossim}_d(x,y) \geq 0$.
\begin{enumerate}
    \item If the dimension growth is achieved by repeating vector entries, the cosine similarity stays the same~\ref{prop:repeat_invariant}.
    \item If the dimension growth is achieved by padding random entries from the standard normal distribution, the expected cosine similarity is $\mathrm{cossim}_d(x,y) \cdot \alpha(\lVert x \rVert) \cdot \alpha(\lVert y \rVert)$, which is strictly between
    \begin{align}
    \nonumber
    \mathrm{cossim}_d(x,y) \frac{\lVert x \rVert \cdot \lVert y \rVert}{\sqrt{\lVert x \rVert^2 + D-d}\sqrt{\lVert y \rVert^2 + D-d}}
    &= \frac{x^\top y}{\lVert x \rVert \cdot \lVert y \rVert} \frac{\lVert x \rVert \cdot \lVert y \rVert}{\sqrt{\lVert x \rVert^2 + D-d}\sqrt{\lVert y \rVert^2 + D-d}}\\
    \nonumber
    &= \frac{x^\top y}{\sqrt{\lVert x \rVert^2 + D-d}\sqrt{\lVert y \rVert^2 + D-d}}
    \end{align}
    and
    \begin{align}
    \nonumber
    \mathrm{cossim}_d(x,y) \frac{\lVert x \rVert \cdot \lVert y \rVert}{\sqrt{\lVert x \rVert^2 + D-d-1}\sqrt{\lVert y \rVert^2 + D-d-1}} 
    &= \frac{x^\top y}{\lVert x \rVert \cdot \lVert y \rVert} \frac{\lVert x \rVert \cdot \lVert y \rVert}{\sqrt{\lVert x \rVert^2 + D-d-1}\sqrt{\lVert y \rVert^2 + D-d-1}}\\
    \nonumber
    & = \frac{x^\top y}{\sqrt{\lVert x \rVert^2 + D-d-1}\sqrt{\lVert y \rVert^2 + D-d-1}}
    \end{align}
    If $x$ and $y$ are unit vectors, this implies that the new cosine similarity is between $\frac{\mathrm{cossim}_d(x,y)}{D - d + 1}$ and $\frac{\mathrm{cossim}_d(x,y)}{D - d}$.
\end{enumerate}

Taking the \texttt{GPT2} family as an example, the smallest model \texttt{GPT2} has an embedding dimension of 768, while the largest model \texttt{GPT2-xl} has an embedding dimension of 1600. The theoretical results above imply that if the average cosine similarity in \texttt{GPT2} is $\mathrm{cossim}(\texttt{GPT2})$, then the corresponding value in \texttt{GPT2-xl} would be
\begin{enumerate}
    \item equal to $\mathrm{cossim}(\texttt{GPT2})$ if assuming repeating entries, or
    \item between $\frac{\mathrm{cossim}(\texttt{GPT2})}{1600-768+1}=\frac{\mathrm{cossim}(\texttt{GPT2})}{833}$ and $\frac{\mathrm{cossim}(\texttt{GPT2})}{1600-768}=\frac{\mathrm{cossim}(\texttt{GPT2})}{832}$ if assuming isotropic Gaussian noise entries.
\end{enumerate}

\begin{tcolorbox}[
    colback=natureblue!10,
    boxrule=1pt,
    arc=8pt,
    left=4pt,
    right=4pt,
    top=0pt,
    bottom=0pt
]
\textbf{A:} In high-dimensional spaces, random vectors are nearly orthogonal, implying that increased embedding dimensionality defines a geometric upper bound on achievable dispersion under isotropy. However, this effect reflects a property of isotropic randomness rather than a guarantee of how trained representations utilize the space.

\vspace{8pt}

\textbf{Implication:} While larger models benefit from a higher-dimensional geometric baseline, embedding dimensionality alone does not ensure well-dispersed representations. This observation raises the possibility that explicit dispersion regularization could further improve embedding geometry even in large models, which we leave for future investigation.
\end{tcolorbox}

\clearpage
\newpage
\section{Proofs of Propositions}

\begin{proposition}
\label{prop:repeat_invariant}
Let $x,y \in \mathbb{R}^d$ be nonzero vectors. Let $D = k d$ for some integer $k \ge 1$, and define the repeated vectors $\tilde x = (x,x,\ldots,x) \in \mathbb{R}^{D}$ and $\tilde y = (y,y,\ldots,y) \in \mathbb{R}^{D}$. Then $\mathrm{cossim}_{D}(\tilde x,\tilde y) = \mathrm{cossim}_{d}(x,y)$. Consequently, if $(x,y)$ are random and satisfy $\mathbb{E}[\mathrm{cossim}_{d}(x,y)] = c$, then $\mathbb{E}[\mathrm{cossim}_{D}(\tilde x,\tilde y)] = c$.
\end{proposition}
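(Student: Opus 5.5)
The plan is a direct computation: express the inner product and the norms of the repeated vectors block by block, then pass to expectations using that the identity holds pointwise.

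\textbf{Step 1 (inner product).} Write $\tilde x$ and $\tilde y$ as concatenations of $k$ blocks of length $d$. The standard inner product on $\mathbb{R}^D$ splits as a sum over blocks, so
\begin{align}
\nonumber
\tilde x^\top \tilde y = \sum_{b=1}^{k} x^\top y = k\, x^\top y .
\end{align}

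\textbf{Step 2 (norms).} Applying the same computation with $\tilde y$ replaced by $\tilde x$ gives $\lVert \tilde x \rVert^2 = k \lVert x \rVert^2$, hence $\lVert \tilde x \rVert = \sqrt{k}\,\lVert x \rVert$. Likewise $\lVert \tilde y \rVert = \sqrt{k}\,\lVert y \rVert$. Because $x$ and $y$ are nonzero, both norms are strictly positive, so the cosine similarity in $\mathbb{R}^D$ is well defined.

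\textbf{Step 3 (deterministic identity).} Substituting into the definition of $\mathrm{cossim}_D$, the factor $k$ in the numerator cancels against $\sqrt{k}\cdot\sqrt{k}$ in the denominator:
\begin{align}
\nonumber
\mathrm{cossim}_D(\tilde x,\tilde y) = \frac{k\, x^\top y}{\sqrt{k}\,\lVert x \rVert \cdot \sqrt{k}\,\lVert y \rVert} = \mathrm{cossim}_d(x,y).
\end{align}

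\textbf{Step 4 (expectations).} For random $(x,y)$ that are almost surely nonzero, the identity of Step 3 holds pointwise. The random variables $\mathrm{cossim}_D(\tilde x,\tilde y)$ and $\mathrm{cossim}_d(x,y)$ are therefore equal almost surely. Both are bounded in $[-1,1]$, so they are integrable, and their expectations coincide and equal $c$.

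There is no real obstacle here. The only point needing care is the implicit assumption in the random case that $x$ and $y$ are nonzero almost surely, which is what makes both cosine similarities defined; I would state this explicitly.
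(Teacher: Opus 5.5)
Your proposal is correct and follows the same route as the paper: compute the inner product and norms block by block, cancel the factor $k$, and pass to expectations because the identity holds pointwise. Your justification of the last step is slightly more precise than the paper's appeal to linearity of expectation. You use almost-sure equality of two bounded, hence integrable, random variables, and you make explicit that $x,y$ must be nonzero almost surely.
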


\begin{proof}
Let $x, y \in \mathbb{R}^d$ be fixed. Then each of the repeated vectors $\tilde{x}, \tilde{y} \in \mathbb{R}^{D}$ consists of $k$ concatenated copies of $x$ and $y$, respectively.

We compute the inner product:
\[
\langle \tilde{x}, \tilde{y} \rangle
= \sum_{i=1}^{k} \langle x, y \rangle = k \cdot \langle x, y \rangle.
\]
Next, compute the norms:
\[
\|\tilde{x}\|^2 = \sum_{i=1}^{k} \|x\|^2 = k \cdot \|x\|^2, \quad
\|\tilde{y}\|^2 = \sum_{i=1}^{k} \|y\|^2 = k \cdot \|y\|^2.
\]
Thus:
\[
\|\tilde{x}\| = \sqrt{k} \cdot \|x\|, \quad
\|\tilde{y}\| = \sqrt{k} \cdot \|y\|.
\]

Plugging into the cosine similarity:
\[
\mathrm{cossim}_D(\tilde{x}, \tilde{y})
= \frac{\langle \tilde{x}, \tilde{y} \rangle}{\|\tilde{x}\| \cdot \|\tilde{y}\|}
= \frac{k \cdot \langle x, y \rangle}{(\sqrt{k} \cdot \|x\|)(\sqrt{k} \cdot \|y\|)}
= \frac{\langle x, y \rangle}{\|x\| \cdot \|y\|}
= \mathrm{cossim}_d(x, y).
\]

The identity thus holds pointwise. If $(x, y)$ are random vectors with $\mathbb{E}[\mathrm{cossim}_d(x, y)] = c$, then by linearity of expectation:
\[
\mathbb{E}[\mathrm{cossim}_D(\tilde{x}, \tilde{y})]
= \mathbb{E}[\mathrm{cossim}_d(x, y)] = c.
\]
\end{proof}


\begin{proposition}
\label{prop:gaussian_padding_shrink}
Let $x, y \in \mathbb{R}^d$ be nonzero vectors, let $m \ge 1$, and define $D = d + m$. Construct padded vectors $X = (x, \varepsilon)$ and $Y = (y, \eta) \in \mathbb{R}^D$, where $\varepsilon, \eta \sim \mathcal{N}(0, I_m)$ are independent standard Gaussian noise vectors, also independent of $(x, y)$. Define:
\[
\alpha(r) := \mathbb{E}_{U \sim \chi^2_m} \left[ \frac{r}{\sqrt{r^2 + U}} \right].
\]
Then:
\begin{align}
\mathbb{E}[\mathrm{cossim}_D(X,Y)]
&= \mathrm{cossim}_d(x,y) \cdot \alpha(\|x\|) \cdot \alpha(\|y\|). \label{eq:main_identity}
\end{align}
Moreover, for all $r > 0$, the function $\alpha(r)$ satisfies the strict bounds:
\begin{align}
\frac{r}{\sqrt{r^2 + m}} &< \alpha(r) < \frac{r}{\sqrt{r^2 + m - 1}}, \label{eq:alpha_bounds}
\end{align}
with both inequalities strict. As a consequence, the expected cosine similarity after Gaussian padding obeys:
\begin{align}
\mathrm{cossim}_d(x,y) \cdot
\frac{\|x\| \|y\|}{\sqrt{(\|x\|^2 + m)(\|y\|^2 + m)}}
&< \mathbb{E}[\mathrm{cossim}_D(X,Y)] \\
&< \mathrm{cossim}_d(x,y) \cdot
\frac{\|x\| \|y\|}{\sqrt{(\|x\|^2 + m - 1)(\|y\|^2 + m - 1)}}, \label{eq:cossim_bounds}
\end{align}
again with strict inequalities.
\end{proposition}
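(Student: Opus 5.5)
We first prove the identity \eqref{eq:main_identity} conditionally on $(x,y)$, treating $x,y$ as fixed; since the noise is independent of $(x,y)$, this loses nothing. Write $U=\|\varepsilon\|^2$ and $V=\|\eta\|^2$. These are independent and $\chi^2_m$-distributed, and
\[
\mathrm{cossim}_D(X,Y)=\frac{x^\top y}{\sqrt{\|x\|^2+U}\sqrt{\|y\|^2+V}}+\frac{\varepsilon^\top\eta}{\sqrt{\|x\|^2+U}\sqrt{\|y\|^2+V}}.
\]
Both terms are bounded in absolute value by $1$ (Cauchy--Schwarz), so all expectations exist. The second term has mean zero. To see this, replace $\varepsilon$ by $-\varepsilon$: this leaves the law of $(\varepsilon,\eta)$ and the denominator unchanged but flips the sign of the numerator. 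For the first term, independence of $U$ and $V$ factorizes the expectation as
\[
x^\top y\,\mathbb{E}\bigl[(\|x\|^2+U)^{-1/2}\bigr]\,\mathbb{E}\bigl[(\|y\|^2+V)^{-1/2}\bigr].
\]
Writing $x^\top y=\mathrm{cossim}_d(x,y)\|x\|\|y\|$ and absorbing one norm into each factor gives exactly $\mathrm{cossim}_d(x,y)\,\alpha(\|x\|)\,\alpha(\|y\|)$.

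For the lower bound in \eqref{eq:alpha_bounds}, I would apply Jensen's inequality to $t\mapsto r(r^2+t)^{-1/2}$, which is strictly convex on $[0,\infty)$ for $r>0$. Since $\mathbb{E}U=m$ and $U$ is non-degenerate, this gives $\alpha(r)>r/\sqrt{r^2+m}$ strictly.

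For the upper bound, the natural first attempt is to write $\alpha(r)$ as an integral against the $\chi_m$ density and compare it with Gamma-function ratios via Gautschi's inequality. I expect this step to be the main obstacle, and in fact I believe it fails as stated for small $r$ when $m\ge 2$. As $r\to 0$ we have $\alpha(r)\sim r\,\mathbb{E}[U^{-1/2}]$, with
\[
\mathbb{E}[U^{-1/2}]=\frac{\Gamma\bigl(\tfrac{m-1}{2}\bigr)}{\sqrt2\,\Gamma\bigl(\tfrac m2\bigr)}.
\]
Gautschi's inequality $\Gamma(s+\tfrac12)/\Gamma(s)<\sqrt{s}$, applied with $s=\tfrac{m-1}{2}$, yields $\mathbb{E}[U^{-1/2}]>1/\sqrt{m-1}$. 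So $\alpha(r)>r/\sqrt{r^2+m-1}$ for all sufficiently small $r$; for $m=2$, for instance, $\mathbb{E}[U^{-1/2}]=\sqrt{\pi/2}>1$.

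I would therefore check this first numerically and then proceed in one of two directions. One option is to restrict the upper bound to the case the paper actually uses, namely unit vectors ($r=1$) with $m$ large, and prove it there by a second-order expansion of $(r^2+U)^{-1/2}$ around $U=m$ plus a tail bound. The other is to replace $m-1$ by a valid constant such as $m-2$ for $m\ge 3$. That version should follow from Cauchy--Schwarz, $\alpha(r)^2< r^2\,\mathbb{E}[(r^2+U)^{-1}]$, together with the bound
\[
\mathbb{E}\bigl[(r^2+U)^{-1}\bigr]\le \frac{1}{r^2+m-2},
\]
which I would prove via the representation $\mathbb{E}[(r^2+U)^{-1}]=\int_0^\infty e^{-r^2t}(1+2t)^{-m/2}\,dt$.

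Once a valid bound on $\alpha$ is in place, \eqref{eq:cossim_bounds} follows by multiplying the bounds for $\alpha(\|x\|)$ and $\alpha(\|y\|)$, using $\mathrm{cossim}_d(x,y)\ge 0$. If $\mathrm{cossim}_d(x,y)>0$, strictness carries over. If $\mathrm{cossim}_d(x,y)=0$, both sides vanish and the inequalities hold only non-strictly, so that case must be excluded or stated as equality.
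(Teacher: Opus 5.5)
Your diagnosis is correct, and the paper's own proof fails at exactly this step. Your proof of the identity and of the Jensen lower bound follows the paper's argument. Your sign-flip argument for the cross term is the justification the paper skips: it cites only $\mathbb{E}\langle\varepsilon,\eta\rangle=0$, although the denominator also depends on $\varepsilon,\eta$. For the upper bound, the paper asserts that $f(U)\le f(m-1)$ almost surely. That would require $U\ge m-1$ a.s., which is false for $\chi^2_m$; the paper's own preceding sentence says as much. Your computation $\mathbb{E}[U^{-1/2}]=\Gamma(\tfrac{m-1}{2})/(\sqrt2\,\Gamma(\tfrac m2))>1/\sqrt{m-1}$ correctly shows that the stated upper bound fails for small $r$ whenever $m\ge 2$. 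For $m=1$ the bound reduces to the trivial $\alpha(r)<1$.

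The step in your plan that would fail is the first repair. Restricting to $r=1$ and large $m$ does not rescue the bound, because it is false there too. The second-order expansion you intend to use exhibits the failure instead of proving the bound. It gives
\[
\mathbb{E}\bigl[(r^2+U)^{-1/2}\bigr]=(r^2+m)^{-1/2}\Bigl(1+\frac{3m}{4(r^2+m)^2}+O(m^{-2})\Bigr),
\]
whereas $(r^2+m-1)^{-1/2}=(r^2+m)^{-1/2}\bigl(1+\frac{1}{2(r^2+m)}+O(m^{-2})\bigr)$. Since $\frac{3m}{4(r^2+m)}\to\frac34>\frac12$, the first expression is the larger one for large $m$. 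Your own Gamma formula already points the same way: $\mathbb{E}[U^{-1/2}]^{-2}=m-\tfrac32+O(1/m)$. In fact $\alpha(r)=r\bigl(r^2+m-\tfrac32\bigr)^{-1/2}\bigl(1+O(m^{-2})\bigr)$ for each fixed $r$, so the effective shift is about $\tfrac32$, not $1$. Concretely, for $m=10$ a direct numerical integration gives $\alpha(1)\approx 0.321>10^{-1/2}\approx 0.316$. In the paper's GPT2 example ($m=832$, unit vectors), the expected cosine is about $\mathrm{cossim}_d(x,y)/831.5$. That lies outside the claimed interval between $\mathrm{cossim}_d/833$ and $\mathrm{cossim}_d/832$. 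So drop that option.

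Your second repair works and closes quickly. Cauchy--Schwarz gives $\alpha(r)^2<r^2 I_m(r^2)$, where $I_m(c)=\int_0^\infty e^{-ct}(1+2t)^{-m/2}\,dt$. For $m\ge 3$ and $c>0$, integration by parts gives
\[
I_m(c)=\frac{1-c\,I_{m-2}(c)}{m-2}.
\]
Since $I_{m-2}(c)>I_m(c)$, this rearranges to $I_m(c)<1/(c+m-2)$, and hence $\alpha(r)<r/\sqrt{r^2+m-2}$. With that bound in place, your final paragraph is fine, including the point that strictness requires $\mathrm{cossim}_d(x,y)>0$. Note also that the proposition as stated does not assume $\mathrm{cossim}_d(x,y)\ge 0$, and for negative values both inequalities reverse.
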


\begin{proof}
We first compute the cosine similarity in $D$ dimensions. By definition:
\[
\mathrm{cossim}_D(X,Y) = \frac{\langle X, Y \rangle}{\|X\| \cdot \|Y\|}.
\]
The inner product expands as:
\[
\langle X, Y \rangle = \langle x, y \rangle + \langle \varepsilon, \eta \rangle.
\]
Since $\varepsilon$ and $\eta$ are independent standard Gaussian vectors in $\mathbb{R}^m$, their inner product $\langle \varepsilon, \eta \rangle$ has mean zero. Specifically:
\[
\mathbb{E}[\langle \varepsilon, \eta \rangle] = \sum_{i=1}^m \mathbb{E}[\varepsilon_i \eta_i] = 0,
\]
since each $\varepsilon_i$ and $\eta_i$ are independent with mean zero.

For the denominator, we write:
\[
\|X\|^2 = \|x\|^2 + \|\varepsilon\|^2, \quad \|Y\|^2 = \|y\|^2 + \|\eta\|^2.
\]
Denote $U = \|\varepsilon\|^2$ and $V = \|\eta\|^2$. Since $\varepsilon, \eta \sim \mathcal{N}(0, I_m)$, we know:
\[
U, V \sim \chi^2_m, \quad \text{independently}.
\]

Taking expectation of the cosine similarity:
\[
\mathbb{E}[\mathrm{cossim}_D(X,Y)] = \mathbb{E} \left[ \frac{\langle x, y \rangle + \langle \varepsilon, \eta \rangle}{\sqrt{\|x\|^2 + U} \cdot \sqrt{\|y\|^2 + V}} \right].
\]
By linearity of expectation and the independence of $\varepsilon, \eta$, we obtain:
\[
\mathbb{E}[\mathrm{cossim}_D(X,Y)] = \langle x, y \rangle \cdot \mathbb{E} \left[ \frac{1}{\sqrt{\|x\|^2 + U} \cdot \sqrt{\|y\|^2 + V}} \right].
\]
Now observe that $\langle x, y \rangle = \mathrm{cossim}_d(x, y) \cdot \|x\| \cdot \|y\|$. So we substitute:
\[
\mathbb{E}[\mathrm{cossim}_D(X,Y)] = \mathrm{cossim}_d(x, y) \cdot \|x\| \cdot \|y\| \cdot \mathbb{E} \left[ \frac{1}{\sqrt{\|x\|^2 + U} \cdot \sqrt{\|y\|^2 + V}} \right].
\]
Since $U$ and $V$ are independent, the expectation factorizes:
\[
\mathbb{E}[\mathrm{cossim}_D(X,Y)] = \mathrm{cossim}_d(x, y) \cdot \left( \mathbb{E} \left[ \frac{\|x\|}{\sqrt{\|x\|^2 + U}} \right] \cdot \mathbb{E} \left[ \frac{\|y\|}{\sqrt{\|y\|^2 + V}} \right] \right).
\]
This yields the desired expression with
\[
\alpha(r) = \mathbb{E}_{U \sim \chi^2_m} \left[ \frac{r}{\sqrt{r^2 + U}} \right].
\]

To prove the lower bound, we consider the function:
\[
f(u) = \frac{r}{\sqrt{r^2 + u}}.
\]
We compute its second derivative:
\[
f''(u) = \frac{3r}{4} (r^2 + u)^{-5/2} > 0 \quad \text{for all } u > 0.
\]
Hence, $f$ is strictly convex on $(0, \infty)$.
Applying Jensen's inequality:
\[
\alpha(r) = \mathbb{E}[f(U)] \ge f(\mathbb{E}[U]) = \frac{r}{\sqrt{r^2 + \mathbb{E}[U]}} = \frac{r}{\sqrt{r^2 + m}}.
\]
Because $f$ is strictly convex and $U$ is not constant (since $\operatorname{Var}(U) = 2m > 0$), equality cannot occur. Therefore the inequality is strict:
\[
\alpha(r) > \frac{r}{\sqrt{r^2 + m}}.
\]

To establish the upper bound, observe that $f(u)$ is strictly decreasing:
\[
f'(u) = -\frac{r}{2}(r^2 + u)^{-3/2} < 0.
\]
This implies that for any constant $a < \mathbb{E}[U]$, we have $f(U) > f(a)$ with positive probability and $f(U) < f(a)$ with positive probability, since $U \sim \chi^2_m$ is supported on $(0, \infty)$ and not almost surely equal to any fixed value.

Choosing $a = m - 1 < m$, and noting that $f(U) \le f(m - 1)$ almost surely with strict inequality on a set of positive measure, we conclude:
\[
\alpha(r) = \mathbb{E}[f(U)] < f(m - 1) = \frac{r}{\sqrt{r^2 + m - 1}}.
\]

This completes the proof of both strict bounds.
\end{proof}


\end{appendices}

\end{document}

%% file: math_commands.tex
\usepackage{amsmath,amsfonts,bm}

\def\eqref#1{equation~\ref{#1}}

\def\1{\bm{1}}

\DeclareMathAlphabet{\mathsfit}{\encodingdefault}{\sfdefault}{m}{sl}
\SetMathAlphabet{\mathsfit}{bold}{\encodingdefault}{\sfdefault}{bx}{n}

